\NewDocumentCommand{\codeword}{v}{%
\texttt{\textcolor{blue}{#1}}%
}
\DeclareMathOperator*{\argmin}{\arg\min}
\newcommand*{\V}[1]{{\mathbf{#1}}}  
\newcommand*{\Spc}[1]{{\mathcal{#1}}}  
\newcommand{\fcano}{f_{\mathrm{cano}}}
\renewcommand{\[}{\begin{equation}}
\renewcommand*{\]}[1]{\label{eq:#1}\end{equation}}
\newcommand{\ie}{{\em i.e., }}
\newcommand{\eg}{{\em e.g., }}
\newtheorem{definition}{Definition}
\newtheorem{proposition}{Proposition}
\newtheorem{lemma}{Lemma}
\newtheorem{theorem}{Theorem}
\def\BibTeX{{\rm B\kern-.05em{\sc i\kern-.025em b}\kern-.08em T\kern-.1667em\lower.7ex\hbox{E}\kern-.125emX}}
\title{Sparsest Univariate Learning Models Under Lipschitz Constraint}
\author{  Shayan Aziznejad, Thomas Debarre, and Michael Unser,
{\it Fellow, IEEE}
\thanks{This work was supported in part by the European Research Council (ERC Project FunLearn) under Grant 101020573 and in part by the Swiss National Science Foundation, Grant 200020\_184646/1.}
\thanks{The authors are with the Biomedical Imaging Group, \'Ecole polytechnique f\'ed\'erale de Lausanne, 1015 Lausanne, Switzerland (e-mail:  shayan.aziznejad@epfl.ch; thomas.debarre@epfl.ch; michael.unser@epfl.ch). Shayan Aziznejad and Thomas Debarre contributed equally to this work.}}
\begin{document}
 
\maketitle

\begin{abstract}
Beside the minimization of the prediction error, two of the most desirable properties of a regression scheme are {\it stability} and {\it interpretability}. Driven by these principles, we propose continuous-domain formulations for one-dimensional regression problems. In our first approach, we use the Lipschitz constant as a regularizer, which results in an implicit tuning of the overall robustness of the learned mapping. In our second approach, we control the Lipschitz constant explicitly using a user-defined upper-bound and make use of a sparsity-promoting regularizer to favor simpler (and, hence, more interpretable) solutions. The theoretical study of the latter formulation is motivated in part by its equivalence, which we prove, with the training of a Lipschitz-constrained two-layer univariate neural network with rectified linear unit (ReLU) activations and weight decay. By proving representer theorems, we show that both problems admit global minimizers that are continuous and piecewise-linear (CPWL) functions. Moreover, we propose efficient algorithms that find the sparsest solution of each problem: the CPWL mapping with the least number of linear regions. Finally, we illustrate numerically the outcome of our formulations. 
\end{abstract}

{\bf Keywords:}
Robust learning, sparsity, Lipschitz regularity, continuous and piecewise linear functions, representer theorems. 

\section{Introduction}
The goal of a regression model is to learn a mapping $f:\mathcal{X}\rightarrow\mathcal{Y}$ from a collection of data points $(x_m,y_m)\in \mathcal{X}\times \mathcal{Y}, \ m=1,\ldots,M,$ such that $y_m\approx f(x_m)$, while avoiding the problem of overfitting  \cite{wahba1990spline,gyorfi2006distribution,hastie2009overview}. Here, $\mathcal{X}$ denotes the input domain and $\mathcal{Y}$ is the set of possible outcomes. A common way of carrying out this task is to solve a minimization problem of the form 
\begin{equation}\label{Eq:GeneralSupervisedLearning}
\min_{f\in \mathcal{F}} \left( \sum_{m=1}^M { E}\left(f(x_m),y_m\right) + \mathcal{R}(f)\right),
\end{equation}
 where $\mathcal{F}$ is the underlying search space, the convex loss function  ${ E}:\mathcal{Y}\times\mathcal{Y}\rightarrow\mathbb{R}_{\geq 0}$ enforces the consistency of the learned mapping with the given data points, and the regularization functional  $\mathcal{R}:\mathcal{F}\rightarrow \mathbb{R}_{\geq 0}$ injects prior knowledge on the form of the mapping $f$, which is designed to alleviate the problem of overfitting.

\subsection{Nonparametric regression}
In some cases, the optimization can be performed over an infinite-dimensional function space. A prominent example is the family of reproducing-kernel Hilbert spaces (RKHS) $\mathcal{F}=\mathcal{H}(\mathbb{R}^d)$, $\mathcal{X}= \mathbb{R}^d$, $\mathcal{Y}=\mathbb{R}$ \cite{poggio1990networks,poggio1990regularization}, in which the regression problem is formulated as
\begin{equation}\label{Eq:RKHS}
\min_{f\in \mathcal{H}(\mathbb{R}^d)} \left(\sum_{m=1}^M { E}\left(f(\boldsymbol{x}_m),y_m\right) + \lambda \|f\|_{\mathcal{H}}^2\right).
\end{equation}
The fundamental result in RKHS theory is the kernel representer theorem \cite{kimeldorf1971some,scholkopf2001generalized}, which states that the unique solution of \eqref{Eq:RKHS} admits the kernel expansion 
\begin{equation}\label{Eq:KernelExpansionL2}
f(\cdot) = \sum_{m=1}^M a_m {\rm k}(\cdot,\boldsymbol{x}_m), 
\end{equation}
where ${\rm k}:\mathbb{R}^d\times\mathbb{R}^d\rightarrow\mathbb{R}$ is the unique reproducing kernel of $\mathcal{H}(\mathbb{R}^d)$ and  $a_m\in\mathbb{R},m=1,\ldots,M,$ are learnable parameters. The expansion \eqref{Eq:KernelExpansionL2} allows one to recast the infinite-dimensional problem~\eqref{Eq:RKHS} into a finite-dimensional one and to use standard computational tools of convex analysis to solve it. Many classical kernel-based schemes are based on this approach, including support-vector machines and radial-basis functions \cite{scholkopf2001learning,evgeniou2000regularization,steinwart2008support}.

\subsection{Parametric regression}
In cases when \eqref{Eq:GeneralSupervisedLearning} cannot be recast as a finite-dimensional optimization problem, another common approach is to restrict the search space $\Spc F$ to a subspace that admits a parametric representation. This approach is used in deep neural networks (DNNs), which have become a prominent tool in machine learning and data science in recent years \cite{LeCun2015,Goodfellow2016}. They outperform classical kernel-based methods for various image-processing tasks. In particular, they have become  state-of-the-art for image classification \cite{Krizhevsky2012}, inverse problems \cite{jin2017deep}, and image segmentation \cite{Ronneberger2015}. However, most published works are empirical, and the outstanding performance of DNNs is yet to be fully understood. To this end, many recent works are directed towards studying DNNs from a theoretical perspective. Unsurprisingly, {\it stability} and {\it interpretability}, which are key principles in machine learning, play a central role in these works.
For example, the stability of state-of-the-art deep-learning-based methods has been dramatically challenged in image classification \cite{moosavi2016deepfool,fawzi2017robustness} and image reconstruction \cite{antun2020instabilities}. Attempts have also been made to understand and interpret DNNs from different perspectives, such as rate-distortion theory \cite{macdonald2019rate,heiss2020distribution}). However, the community is still far from reaching a global understanding and these questions are still active areas of research.

\subsection{Our Contributions}
In this paper, we introduce two variational formulations for regressing one-dimensional data that favor ``stable'' and ``simple'' regression models. Similar to RKHS theory, the latter are nonparametric continuous-domain problems in the sense that $\mathcal{F}$ in \eqref{Eq:GeneralSupervisedLearning} is an infinite-dimensional function space. Inspired by the stability principle, we focus on the development of regression schemes with controlled Lipschitz regularity. This is motivated by the observation that many analyses in deep learning require assumptions on the Lipschitz constant of the learned mapping \cite{arjovsky2017wasserstein,bora2017compressed,neyshabur2017exploring}. Likewise, when  a trainable module is inserted into an iterative-reconstruction framework, the rate of convergence of the overall scheme often depends on the Lipschitz constant of this module \cite{gupta2018cnn,zhang2017beyond,sun2020block,liu2020rare,wu2020simba,bohra2021learning}.

In our first formulation, we use the Lipschitz constant of the learned mapping as a regularization term. Specifically, we consider the minimization problem 
\begin{equation}\label{Eq:LipRegIntro}
\min_{f\in{\rm Lip}(\mathbb{R})} \left( \sum_{m=1}^M { E}\left(f(x_m),y_m\right) +  \lambda L(f) \right),
\end{equation}
where ${\rm Lip}(\mathbb{R})$ is the space of Lipschitz-continuous real functions and $L(f)$ denotes the Lipschitz constant of $f\in {\rm Lip}(\mathbb{R})$. In this formulation, one can implicitly control the Lipschitz regularity of the learned function by varying the regularization parameter $\lambda$. We prove a representer theorem that characterizes the solution set of \eqref{Eq:LipRegIntro}. In particular, we prove that the global minimum is achieved by a continuous and piecewise-linear (CPWL) mapping. Next, motivated by the simplicity principle, we find the mapping with the minimal number of linear regions. Note that many previous works study problems similar to \eqref{Eq:LipRegIntro} in more general settings, typically using  the Lipschitz constant of the $n$th derivative  $\mathcal{R}(f)=L(f^{(n)})$  with $n \geq 0$  as the  regularization term  \cite{mcclure1975perfect,karlin1975interpolation,de1975small,micchelli1976optimal,de1976best,pinkus1988smoothest}. More recently, \cite{von2004distance} has studied the classification problem over metric spaces and derived a parametric form for a solution of this problem. Our objective is to complement this interesting line of research by focusing more on computational aspects of \eqref{Eq:LipRegIntro} ({\it e.g.}, finding the sparsest CPWL solution) and by providing an in-depth analysis of the $n=0$ case which is related to second-order total-variation minimization.

In the second scenario, we explictly control the Lipschitz constant of the learned mapping by imposing a hard constraint. Inspired by the theoretical insights of the first problem, we add a second-order total-variation (TV) regularization term that is known to promote sparse CPWL functions \cite{unser2019Deepspline,debarre2020sparsest}. This leads to the minimization problem 
\begin{align}
 \min_{f\in{\rm BV}^{(2)}(\mathbb{R})} &\left( \sum_{m=1}^M { E}\left(f(x_m),y_m\right) +  \lambda {\rm TV}^{(2)}(f) \right), 
\nonumber\\ &\text{s.t.} \quad L(f)\leq \overline{L}, \label{Eq:LipCnstIntro}
\end{align}
where ${\rm BV}^{(2)}(\mathbb{R})$ is the space of functions with bounded second-order TV and $\overline{L}$ is the user-defined upper-bound for the desired Lipschitz regularity of the learned mapping. The interesting aspect of \eqref{Eq:LipCnstIntro} is that the simplicity and stability of the learned mapping can be adjusted by tuning the parameters $\lambda>0$ and $\overline{L}>0$, respectively. In this case as well, we prove a representer theorem  which guarantees the existence of CPWL solutions. Again, we propose an algorithm to find the sparsest CPWL solution.

\subsection{Connection to Neural Networks}
Another major motivation for this work is to further elucidate the tight connection between CPWL functions and neural networks. It is well known that the input-output mapping of any feed-forward DNN with linear spline (\eg the rectified linear unit, also known as ReLU) activations is a CPWL function \cite{pascanu2013number,Montufar2014}. Moreover, any CPWL function can be represented {\it exactly} by a DNN with linear-spline activations \cite{arora2016understanding}. This establishes a direct link with spline theory, as first highlighted by Poggio {\it et al.} \cite{Poggio2015} and then further explored in various works \cite{unser2019Deepspline,balestriero2020mad,parhi2020role,parhi2021banach,parhi2021kinds}.  

When it comes to shallow networks, the connection with our framework becomes even more explicit. It is well known in the literature that the standard training (\ie with weight decay) of a two-layer univariate ReLU network is equivalent to solving a TV-based variational problem such as~\eqref{Eq:LipCnstIntro} without the Lipschitz constraint \cite{savarese2019how, parhi2020role}. As we demonstrate, these results can be readily extended to prove the equivalence between the training of a Lipschitz-constrained two-layer univariate ReLU network and our formulation~\eqref{Eq:LipCnstIntro}. Our description of the solution set of Problem~\eqref{Eq:LipCnstIntro} thus provides insights into the training of Lipschitz-aware neural networks.

\subsection{Outline}
The  paper is organized as follows: we review the required mathematical background in Section \ref{sec:prelim}. In Section \ref{sec:Proposals}, we introduce our  supervised-learning formulations and we state their corresponding representer theorems. We then propose our algorithms for finding the corresponding sparsest CPWL solution in Section \ref{sec:Algo}. Finally, we provide numerical illustrations and discussions in Section \ref{sec:Numerical}.

\section{Mathematical Preliminaries}\label{sec:prelim}

\subsection{Weak Derivatives}
Schwartz' space of smooth and compactly supported test functions is denoted by $\mathcal{D}(\mathbb{R})$. It is known that the $n$th-order derivative is a  continuous mapping   over $\mathcal{D}(\mathbb{R})$, which we denote as  ${\rm D}^n:\mathcal{D}(\mathbb{R}) \rightarrow \mathcal{D}(\mathbb{R})$ \cite{schwartz1966theorie}. By duality, this allows one to extend the derivative operator to the whole class $\mathcal{D}'(\mathbb{R})$ of distributions.  The extended operator is called the $n$th-order weak derivative and will be denoted by   ${\rm D}^n:\mathcal{D}'(\mathbb{R}) \rightarrow \mathcal{D}'(\mathbb{R})$. For any $w\in \mathcal{D}'(\mathbb{R})$, the distribution ${\rm D}^n\{w\}\in \mathcal{D}'(\mathbb{R})$ is defined via its action on a generic test function $\varphi\in\mathcal{D}(\mathbb{R})$ as 
$
\langle{\rm D}^n w, \varphi \rangle = (-1)^n \langle w, {\rm D}^n \varphi\rangle$.
 The fundamental property is that the weak derivative of any Schwartz test function $\varphi \in \mathcal{D}(\mathbb{R})\subseteq \mathcal{D}'(\mathbb{R})$ is well-defined and coincides with the classical notion of derivative   (see \cite[Section 3.3.2.]{unser2014introduction} for more details on the extension by duality). 
 
 \subsection{Banach Spaces}
A Banach space is a normed topological vector space that is complete in its norm topology. The prototypical examples of Banach spaces are  $L_p(\mathbb{R})$ for   $p\in[1,+\infty]$ which are the spaces of Lebesgue measurable functions with finite $L_p$ norm. For $p\neq +\infty$, this reads as 
\begin{equation}\label{Eq:LpSpace}
L_p(\mathbb{R}) = \left\{f:\mathbb{R}\rightarrow\mathbb{R} \text{ measurable: } \|f\|_{L_p} <+\infty  \right\},
\end{equation}
where $\|f\|_{L_p}= \left(\int_{\mathbb{R}} |f(x)|^p {\rm d}x \right)^{\frac{1}{p}}$.
Alternatively, one can define $L_p(\mathbb{R}) = \overline{(\mathcal{D}(\mathbb{R}),\|\cdot\|_{L_p})}$  as the completion of $\mathcal{D}(\mathbb{R})$ with respect to the $L_p$ norm for $p \in [1,+\infty)$.  The case $p=+\infty$ is  particular. Indeed, the $L_\infty$ norm is defined as  $\|f\|_{L_\infty}= {\rm ess} \sup_{x\in\mathbb{R}} |f(x)|$,
 where the essential supremum  extracts an upper-bound that is valid almost everywhere. Contrarily   to the other $L_p$ spaces, the space $\mathcal{D}(\mathbb{R})$ is not dense in $L_{\infty}(\mathbb{R})$; in fact, the completion of $\mathcal{D}(\mathbb{R})$ with respect to the $L_\infty$ norm is the space $\mathcal{C}_0(\mathbb{R})$ of continuous functions that vanish at infinity \cite{rudin1991functional}. 

Finally, we denote  the space  of bounded Radon measures by $\mathcal{M}(\mathbb{R})$. Following the Riesz-Markov theorem, we view $\mathcal{M}(\mathbb{R})$ as the continuous dual of  $\mathcal{C}_0(\mathbb{R})$. This allows us to define the total-variation norm over this space as \cite{rudin2006real}
\begin{equation}\label{Eq:TV}
\|w\|_{\mathcal{M}} = \sup_{\substack{\varphi\in \mathcal{C}_0(\mathbb{R})\\ \|\varphi\|_{L_\infty}=1}} |\langle w,\varphi \rangle|=\sup_{\substack{\varphi\in \mathcal{D}(\mathbb{R})\\ \|\varphi\|_{L_\infty}=1}} |\langle w,\varphi \rangle|,
\end{equation}
where the last equality   follows from the denseness of $\mathcal{D}(\mathbb{R})$ in $\mathcal{C}_0(\mathbb{R})$. Interestingly, the total-variation norm is a generalization of the $L_1$ norm. In fact, the space $L_1(\mathbb{R})$ is included in $\mathcal{M}(\mathbb{R})$ and, for any function $f\in L_1(\mathbb{R})$, we have that $\|f\|_{L_1}  = \|f\|_{\mathcal{M}}$. Moreover, the space $\mathcal{M}(\mathbb{R})$ contains shifted Dirac impulses with $\|\delta(\cdot - x_0)\|_{\mathcal{M}}=1$. Finally,  for any absolutely summable sequence  $\boldsymbol{a}= (a_n)\in \ell_1(\mathbb{Z})$ and distinct locations $x_n,n\in\mathbb{Z}$, we have that
\begin{equation}
\label{eq:TV_norm_diracs}
w_{\boldsymbol{a}} = \sum_{n\in\mathbb{Z}} a_n \delta(\cdot-x_n) \in \mathcal{M}(\mathbb{R})  \quad \text{and} \quad  \|w_{\boldsymbol{a}}\|_{\mathcal{M}} = \|\boldsymbol{a}\|_{\ell_1}.
\end{equation}
This property   establishes a tight link between the total-variation norm and the discrete $\ell_1$ norm which is  known to promote sparsity and is the  key element  in the field of compressed sensing \cite{donoho2006compressed,candes2006robust,eldar2012compressed}. This enabled researchers to interpret the total-variation norm as a sparsity-promoting norm in the continuous domain. Since then, additional connections have been drawn between optimization problems that involve the total-variation norm and many areas of research such as super resolution \cite{bredies2013inverse,candes2014towards,duval2015exact}, kernel methods,  \cite{aziznejad2019multikernel,bach2017breaking}, and splines \cite{unser2017splines,debarre2019hybrid,flinth2019exact,bredies2020sparsity,parhi2021banach}.  The computational aspects of this framework have also been investigated, leading to the development of practical algorithms  in various settings  \cite{denoyelle2019sliding,Debarre2019Bspline,simeoni2021functional}.

\subsection{Lipschitz Constant} We denote by ${\rm Lip}(\mathbb{R})$, the space of Lipschitz-continuous functions   $f:\mathbb{R}\rightarrow \mathbb{R}$ with a finite Lipschitz constant, satisfying
\begin{equation}\label{Eq:LipCnst}
    { L}(f) = \sup_{x_1\neq x_2}\frac{ \left|f(x_1)-f(x_2)\right|}{ \left|x_1-x_2\right|} <+\infty.
\end{equation}
Following   Rademacher's theorem, any Lipschitz-continuous function $f\in {\rm Lip}(\mathbb{R})$ is  differentiable  almost everywhere with a measurable and  essentially bounded derivative. The Lipschitz constant of the function then corresponds to the essential supremum of its derivative, so that
\begin{equation}\label{Eq:LipSupD}
    { L}(f) = \|{\rm D}\{f\}\|_{L_{\infty}} = \mathrm{ess} \sup_{x\in \mathbb{R}} |f'(x)|.
\end{equation} 
 Conversely, any distribution $f\in \mathcal{D}'(\mathbb{R})$ whose weak derivative lies in $L_{\infty}(\mathbb{R})$ is indeed a Lipschitz-continuous function  \cite[Theorem 1.36]{weaver1999lipschitz}. In other words, we have that  
\begin{equation}\label{Eq:LipNative}
{\rm Lip}(\mathbb{R})= \{f\in \mathcal{D}'(\mathbb{R}): {\rm D}\{f\} \in L_{\infty}(\mathbb{R})\},
\end{equation}
which allows us to view ${\rm Lip}(\mathbb{R})$ as the native Banach space associated to the pair $(L_{\infty}(\mathbb{R}), {\rm D})$ in the sense of  \cite{unser2019native}. 

\subsection{Second-Order Total-Variation}
To conclude this section, we introduce the space ${\rm BV}^{(2)}(\mathbb{R})$ of functions with finite second-order total-variation,   defined as 
\begin{align}
    {\rm TV}^{(2)}(f) &= \|{\rm D}^2\{f\}\|_{\mathcal{M}}= \sup_{\substack{\varphi\in \mathcal{D}(\mathbb{R}) \\ \|\varphi\|_{L_\infty}=1}} \langle {\rm D}^2 f, \varphi \rangle \\&=  \sup_{\substack{\varphi\in \mathcal{D}(\mathbb{R})\\ \|\varphi\|_{L_\infty}=1}} \int_{\mathbb{R}} f(x) \varphi''(x) {\rm d}x.    \label{Eq:TV2Def}
\end{align}

Analogous to the famous total-variation regularization of Rudin-Osher-Fatemi \cite{rudin1992nonlinear}, which promotes piecewise-constant functions and causes the notorious staircase effect, the second-order total variation favors CPWL functions. In dimension $d=1$, this coincides with the known class of nonuniform linear splines which has been extensively studied from an approximation-theoretical point of view \cite{de1978practical,unser1999splines}. Motivated by this, the ${\rm TV}^{(2)}$ regularization has been exploited to learn activation functions of deep neural networks \cite{unser2019Deepspline,Bohra2020DeepSpline}. In a similar vein, the identification of the  sparsest CPWL solutions of   ${\rm TV}^{(2)}$-regularized problems has been thoroughly  studied in \cite{debarre2020sparsest}. 

\section{Lipschitz-Aware Formulations for Supervised Learning}\label{sec:Proposals}
We now introduce our formulations for supervised learning that are based on controlling the Lipschitz constant of the learned mapping. Let us first mention that the Lipschitz constant can be indirectly controlled using a ${\rm TV}^{(2)}$-type regularizer. Indeed, the two seminorms are connected, as demonstrated in    Theorem \ref{thm:LipTV}. 

\begin{theorem}\label{thm:LipTV}
Any function  with second-order bounded-variation is Lipschitz continuous. Moreover, for any $f\in {\rm BV}^{(2)}(\mathbb{R})$, we have the upper-bound
\begin{equation}\label{Eq:LipTV}
    { L}(f) \leq {\rm TV}^2(f)+ \ell(f)
\end{equation}
for the Lipschitz constant of $f$, where 
\begin{equation}
\ell(f)= \inf_{x_1\neq x_2} \frac{ \left|f(x_1)-f(x_2)\right|}{ \left|x_1-x_2\right|} \geq 0.  
\end{equation}
Finally, \eqref{Eq:LipTV} is saturated if and only if $f$ is  monotone and convex/concave. 
\end{theorem}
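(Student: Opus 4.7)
The plan is to reduce the theorem to properties of the weak first derivative $\mathrm{D}f$. The hypothesis $\mathrm{TV}^{(2)}(f) < +\infty$ means that $\mu = \mathrm{D}^2 f \in \mathcal{M}(\mathbb{R})$, so $\mathrm{D}f$ is (up to an additive constant) an antiderivative of $\mu$ and is therefore a BV function; in particular $\mathrm{D}f \in L_\infty(\mathbb{R})$, and the classical oscillation-versus-total-variation bound for BV functions gives $M - m \leq \|\mu\|_{\mathcal{M}} = \mathrm{TV}^{(2)}(f)$, where $M = \esssup \mathrm{D}f$ and $m = \essinf \mathrm{D}f$. By \eqref{Eq:LipNative} and \eqref{Eq:LipSupD}, this already proves the inclusion $\mathrm{BV}^{(2)}(\mathbb{R}) \subset \mathrm{Lip}(\mathbb{R})$ and identifies $L(f) = \|\mathrm{D}f\|_{L_\infty} = \max(|m|, |M|)$.

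Next I would identify $\ell(f)$ in terms of $m$ and $M$. The key observation is that for any Lipschitz $f$, the slope $s(x_1,x_2) = \frac{f(x_2)-f(x_1)}{x_2-x_1}$ equals the average of $f'$ over $[x_1,x_2]$ and hence always lies in $[m,M]$. Since $s$ is continuous on the connected set $\{(x_1,x_2) : x_1 < x_2\}$ and, by Lebesgue differentiation, $s(x_1,x_2) \to f'(x)$ at every Lebesgue point $x$ as $x_2 - x_1 \to 0$, I can select Lebesgue points where $f'$ is arbitrarily close to either $m$ or $M$. It follows that the range of $s$ is an interval whose closure is all of $[m,M]$, and therefore $\ell(f) = \inf\{|y| : y \in [m,M]\}$, which equals $0$ when $m \le 0 \le M$ and $\min(|m|,|M|)$ otherwise.

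Combining these two ingredients, the inequality $L(f) \leq \mathrm{TV}^{(2)}(f) + \ell(f)$ follows by case analysis on the signs of $m$ and $M$. When $m \geq 0$ (the non-decreasing case), the bound reads $M \leq \mathrm{TV}^{(2)}(f) + m$, which is exactly the oscillation inequality; the case $M \leq 0$ is symmetric. When $m < 0 < M$ we have $\ell(f) = 0$ and the bound reduces to $\max(|m|,|M|) \leq M - m \leq \mathrm{TV}^{(2)}(f)$, which holds trivially since $M,\,-m > 0$. For the equality statement, saturation of the oscillation inequality $M - m \leq \mathrm{TV}^{(2)}(f)$ forces $\mathrm{D}f$ to be monotone (equivalently, $f$ convex or concave), while the strict inequality $\max(|m|,|M|) < M - m$ in the regime $m < 0 < M$ shows that equality in \eqref{Eq:LipTV} additionally forces $m$ and $M$ to share the same sign, i.e., $f$ to be monotone. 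This gives precisely the ``monotone and convex/concave'' characterization.

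The main technical obstacle I anticipate is the identification of $\ell(f)$: it requires combining the continuity of the slope function $s$ with Lebesgue differentiation in order to conclude that the range of $s$ is dense in the full interval $[m,M]$, and not merely contained in it. Once this density is in hand, both the inequality and the equality analysis reduce to routine case-work based on the signs of $m$ and $M$.
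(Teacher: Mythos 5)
Your proof is correct, but it follows a genuinely different route from the paper's. The paper works directly on $\mathrm{D}^2 f$ with explicit hat-shaped test functions $\varphi_h$ to extract difference quotients, and for the necessity of convexity it lifts the Jordan decomposition of $\mathrm{D}^2 f$ to a splitting $f = f_+ - f_-$ into convex increasing parts. You instead push everything down to the single BV function $g = \mathrm{D}f$ and factor the theorem into the chain $L(f) - \ell(f) \leq M - m \leq \mathrm{Var}(g) = \mathrm{TV}^{(2)}(f)$ with $M = \esssup g$, $m = \essinf g$, analyzing the equality case of each link separately: the first is tight exactly when $m$ and $M$ have the same sign (monotonicity), the second exactly when $g$ is monotone (convexity/concavity). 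This modularity is a real gain, and so is your identification $\ell(f) = \mathrm{dist}\left(0, [m, M]\right)$ via the intermediate-value/Lebesgue-point argument: note that the paper instead asserts $\ell(f) = \essinf_x |f'(x)|$, which fails for non-monotone $f$ (a zigzag with slopes $\pm 1$ has $\ell(f) = 0$ but $\essinf|f'| = 1$), so your treatment of this step is actually the more careful one. Two small things to flag. First, the ``only if'' direction of your claim that saturation of the oscillation bound forces $g$ monotone is asserted rather than proved; it does follow in one line from the Jordan decomposition $\mathrm{D}^2 f = \mu_+ - \mu_-$, since $g(x) - g(y) = \mu\left((y,x]\right) \leq \max\left(\|\mu_+\|_{\mathcal{M}}, \|\mu_-\|_{\mathcal{M}}\right)$ while $\mathrm{Var}(g) = \|\mu_+\|_{\mathcal{M}} + \|\mu_-\|_{\mathcal{M}}$, so equality kills one of the two parts — this is the same tool the paper uses, applied one derivative down where it is much lighter. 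Second, you should make the sufficiency direction explicit (monotone convex $\Rightarrow$ both links are equalities, using $\mathrm{Var}(g) = M - m$ for monotone $g$ and $\ell(f) = \min(|m|,|M|)$), though all the needed pieces are already present in your sketch.
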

The proof of Theorem \ref{thm:LipTV} is given in Appendix \ref{App:LipTV}. A weaker version of this theorem is proven in \cite{Aziznejad2020DNN}, where $\ell(f)$ is replaced with $|f(1)-f(0)|$, which is clearly an upper-bound. The importance of the updated bound is that it is sharp in the sense that it is an equality for monotone and convex/concave functions. 

 A weaker version of \eqref{Eq:LipTV} motivated the authors of \cite{Aziznejad2020DNN} to provide a global bound for the Lipschitz constant of deep neural networks and to regularize it during training. Although this is an interesting approach to control the Lipschitz constant of the learned mapping, the obtained guarantee is too conservative. This is due to the fact that, as soon as $f$ has some oscillations, the difference between the two sides of \eqref{Eq:LipTV}  dramatically increases and the bound becomes  loose. Here, by contrast, we shall ensure  the global stability of the learned mapping by directly  controlling the Lipschitz constant itself. 

\subsection{Lipschitz  Regularization}
We first consider the Lipschitz constant as a regularizer and study the minimization problem 
 \begin{equation}\label{Eq:LipMin}
    \mathcal{V}_{\rm Lip}=\argmin_{f\in {\rm Lip}(\mathbb{R})} \left( \sum_{m=1}^M { E}(f(x_m),y_m) + \lambda L(f)\right),
 \end{equation}
 where ${ E}:\mathbb{R}\times \mathbb{R}\rightarrow \mathbb{R}$ is a strictly convex and coercive function and where $\lambda>0$ is the regularization parameter.  We also assume, without   loss of generality, that the data points $x_m$ are sorted in the increasing order $x_1< x_2 <\cdots< x_M$. In Theorem \ref{Thm:RepLipMin}, we state our main theoretical contributions regarding the minimization problem \eqref{Eq:LipMin}. 
 
\begin{theorem}\label{Thm:RepLipMin}
Regarding the minimization problem \eqref{Eq:LipMin}, the following statements hold.
\begin{enumerate}
\item \label{ItemRepLipMin:Exist} The solution set $\mathcal{V}_{\rm Lip}$ is a nonempty, convex and weak*-compact subset of ${\rm Lip}(\mathbb{R})$. 

\item  \label{ItemRepLipMin:z} There exists a unique vector ${\bf z}=(z_m)\in\mathbb{R}^M$ such that 
 \begin{equation}\label{Eq:LipMinConst}
    \mathcal{V}_{\rm Lip}=\argmin_{f\in {\rm Lip}(\mathbb{R})}  L(f), \quad \text{s.t.} \quad f(x_m)=z_m, \quad \forall m.
 \end{equation}

\item \label{ItemRepLipMin:LipOpt} The optimal Lipschitz constant has the closed-form expression 
\begin{equation}\label{Eq:LipOpt}
L_{\min}= \max_{2\leq m\leq M} \left| \frac{z_m-z_{m-1}}{x_m-x_{m-1}}\right|.
\end{equation} 
Consequently, any $L_{\min}$-Lipschitz function $f$ that satisfies $f(x_m)=z_m, m=1,\ldots,M$ is a solution of \eqref{Eq:LipMin}.

\item  \label{ItemRepLipMin:Graph} Let $\mathcal{E}\subseteq \mathbb{R}^2$ be the union of the graphs of all solutions of \eqref{Eq:LipMin}, defined as
\begin{equation}\label{Eq:UnionGraph}
\mathcal{E}= \left\{(x,y)\in\mathbb{R}^2: \exists f\in \mathcal{V}_{\rm Lip},  y=f(x)\right\}.
\end{equation}
Let us also define the   right and left planar cones $\mathcal{R},\mathcal{L}\subseteq \mathbb{R}^2$ as 
\begin{equation}\label{Eq:LeftRightCone}
\mathcal{R}= \left\{\alpha_1 (1,L_{\min}) + \alpha_2 (1,-L_{\min}): \alpha_1,\alpha_2\geq 0\right\},   
\end{equation}
and $\mathcal{L}=-\mathcal{R}$. With the convention  that $\mathcal{R}_0 = \mathcal{L}_{M+1}= \mathbb{R}^2$, we have that 
\begin{equation}
\mathcal{E}= \bigcup_{m=1}^{M+1} \left( \mathcal{R}_{m-1}\cap \mathcal{L}_m \right),
\end{equation}
where the $\mathcal{R}_m$ and $\mathcal{L}_{m}$ are   shifted versions of $\mathcal{R}$ and $\mathcal{L}$, with    
\begin{equation}\label{Eq:RmLm}
\mathcal{R}_m = (x_m,z_m) + \mathcal{R}, \quad  \mathcal{L}_m = (x_m,z_m) + \mathcal{L}, \forall m.
\end{equation}

\item \label{ItemRepLipMin:TV2} Any solution of the constrained minimization problem
 \begin{equation}\label{Eq:TV2MinConst}
    \min_{f\in {\rm BV}^{(2)}(\mathbb{R})}  {\rm TV}^{(2)}(f), \quad \text{s.t.} \quad f(x_m)=z_m, 1\leq m\leq M
 \end{equation}
is included in $\mathcal{V}_{\rm Lip}$. In particular, the solution set of \eqref{Eq:LipMin} always includes a   continuous and piecewise-linear function.
\end{enumerate}
\end{theorem}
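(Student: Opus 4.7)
The plan is to prove the five parts in order, leveraging each one as we go. For part (1), I would work in the weak* topology on ${\rm Lip}(\mathbb{R})$ inherited (via the derivative map) from the weak* topology on $L_\infty(\mathbb{R})$, together with the one-dimensional degree of freedom coming from the constant of integration. The sublevel sets of the objective are bounded in this topology: the Lipschitz term controls $\|f'\|_{L_\infty}$, while the coercivity of $E$ (applied at any single data point $x_m$) controls the constant of integration. Weak*-lower semicontinuity of the objective follows from weak*-LSC of $L(\cdot)=\|{\rm D}\{\cdot\}\|_{L_\infty}$ (dual norm) and weak*-continuity of pointwise evaluation $f\mapsto f(x_m)$ (which is continuous by the definition of the topology). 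Banach--Alaoglu then yields a weak*-compact, nonempty, convex minimizer set $\Spc V_{\rm Lip}$.

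For part (2), I would run the standard strict-convexity argument: for any two minimizers $f_1,f_2\in\Spc V_{\rm Lip}$, the midpoint $(f_1+f_2)/2$ is also a minimizer; convexity of $L$ handles the regularizer, while strict convexity of $E$ in its first argument forces $f_1(x_m)=f_2(x_m)$ for every $m$, giving the unique vector $\V z$. Part (3) then follows from elementary bounds: the lower bound $L(f)\ge |z_m-z_{m-1}|/|x_m-x_{m-1}|$ applied at consecutive samples yields $L_{\min}$ as a floor, and the piecewise-linear interpolant through the $(x_m,z_m)$ attains this floor, giving the stated closed form. Any $L_{\min}$-Lipschitz interpolant of $\V z$ then solves \eqref{eq:LipMinConst}, hence \eqref{eq:LipMin}.

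Part (4) reduces to a pointwise characterization. If $f\in\Spc V_{\rm Lip}$ and $y=f(x)$, the $L_{\min}$-Lipschitz estimate $|y-z_m|\le L_{\min}|x-x_m|$ for every $m$ places $(x,y)$ simultaneously in $\Spc R_{m-1}$ and $\Spc L_m$ for the unique $m$ with $x\in[x_{m-1},x_m]$ (with the boundary conventions). For the converse, given $(x,y)$ in $\Spc R_{m-1}\cap\Spc L_m$, I would insert $(x,y)$ into the data list as an extra node and take the piecewise-linear interpolant through the enlarged sample set; the cone condition ensures the two new slopes are bounded by $L_{\min}$ in absolute value, and the other slopes are by definition of $L_{\min}$, so this interpolant lies in $\Spc V_{\rm Lip}$ and passes through $(x,y)$.

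Part (5) is where the real work sits. Set $s_m=(z_m-z_{m-1})/(x_m-x_{m-1})$. I would first prove the sharp inequality $\HTV^{(2)}(f)\ge\sum_{m=2}^{M-1}|s_{m+1}-s_m|$ for every interpolant of $\V z$, by picking, in each open interval $(x_{m-1},x_m)$, an approximate level point where $f'$ is within $\epsilon$ of its mean value $s_m$ (possible because $\essinf f'\le s_m\le\esssup f'$ on that interval) and bounding the total variation of $f'$ from below by the sum of successive increments $|s_{m+1}-s_m|$ using these sample points, then letting $\epsilon\to0$. Since the canonical linear interpolant saturates this bound with $L(f_{\rm lin})=L_{\min}$, it is a $\HTV^{(2)}$-minimizer, and every $\HTV^{(2)}$-minimizing interpolant must saturate the inequality. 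The hard part is to extract from saturation that $f'$ stays essentially within $[\min_m s_m,\max_m s_m]\subseteq[-L_{\min},L_{\min}]$: any excursion of $f'$ beyond this range in some interval $(x_{m-1},x_m)$ must be compensated by a dip below $s_m$ on the same interval to preserve the mean, and the resulting extra variation strictly exceeds what the ``level-point'' lower bound accounts for, contradicting saturation. This yields $L(f)\le L_{\min}$ for every $\HTV^{(2)}$-minimizer $f$, which, combined with the lower bound from part (3), places $f$ in $\Spc V_{\rm Lip}$; existence of a CPWL element follows by choosing $f=f_{\rm lin}$ or invoking the CPWL representer theorem from \cite{debarre2020sparsest}.
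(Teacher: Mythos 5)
Your treatment of Items 1--4 is sound and runs largely parallel to the paper's. For Items 1 and 2 you build the weak* topology by hand (derivative mapped into $L_\infty(\mathbb{R})$ with its weak* topology, plus the integration constant) and argue via Banach--Alaoglu, lower semicontinuity, and strict convexity, whereas the paper identifies the predual of ${\rm Lip}(\mathbb{R})$ through the native-space machinery, verifies that $\delta(\cdot-x_0)$ belongs to it, and then invokes the abstract representer result of \cite{unser2020unifying}. Your route is more self-contained, and the point you gloss over (weak*-continuity of $f\mapsto f(x_m)$) is defensible in your setup since $f(x)-f(x_0)=\langle f',\mathbbm{1}_{[x_0,x]}\rangle$ with $\mathbbm{1}_{[x_0,x]}\in L_1(\mathbb{R})$. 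Items 3 and 4 coincide with the paper's arguments (the canonical interpolant attains $L_{\min}$; for the converse inclusion in Item 4, augment the data set with $(x,y)$ and take the canonical interpolant).

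The genuine gap is in Item 5, which you correctly flag as the hard part but do not actually prove. Two specific problems arise. First, your level-point selection rests on a false claim: knowing that $\essinf f'\le s_m\le\esssup f'$ on $(x_{m-1},x_m)$ does not produce a point where $f'$ is within $\epsilon$ of $s_m$; for instance, $f'$ equal to $0$ on the left half of the interval and to $2s_m$ on the right half has mean $s_m$ yet never comes near $s_m$. (The inequality ${\rm TV}^{(2)}(f)\ge\sum_{m}|s_{m+1}-s_m|$ is nevertheless true; it follows from the dual characterization with a tent-shaped test function, or by choosing anchor points above or below the mean according to the local monotonicity of $(s_m)$ and telescoping over monotone runs---but not by the mechanism you describe.) Second, and more seriously, the passage from saturation of this inequality to the slope containment $f'\in[\min_m s_m,\max_m s_m]$ a.e.\ is only asserted (``the resulting extra variation strictly exceeds what the level-point bound accounts for''); this is precisely the mathematical content of Item 5, and making it rigorous requires a careful accounting of where the excursion sits relative to the anchors (a coarea/truncation-at-level-$c$ decomposition of the variation of $f'$ is one workable route). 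The paper avoids all of this by importing the structure theorem of \cite[Theorem 2]{debarre2020sparsest}: every minimizer of \eqref{Eq:TV2MinConst} coincides with the canonical interpolant outside $[x_2,x_{M-1}]$ and, on each cell $[x_m,x_{m+1}]$, either follows it or is convex/concave on $[x_{m-1},x_{m+2}]$; the monotonicity of the difference quotients of convex functions then traps every local slope between $\frac{z_m-z_{m-1}}{x_m-x_{m-1}}$ and $\frac{z_{m+2}-z_{m+1}}{x_{m+2}-x_{m+1}}$, both of magnitude at most $L_{\min}$. As written, your Item 5 is not established; you would need either to reprove that structure theorem or to complete the excess-variation argument.
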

The proof of Theorem \ref{Thm:RepLipMin} is given in Appendix \ref{App:RepLipMin}. Items \ref{ItemRepLipMin:Exist} and \ref{ItemRepLipMin:z} are classical results that hold for a general class of variational problems (see  \cite{unser2020unifying} for a generic result). Their practical implication is Item \ref{ItemRepLipMin:LipOpt}, which provides a way to identify solutions of \eqref{Eq:LipMin}. The solution set $\mathcal{V}_{\rm Lip}$ is further explored in Item \ref{ItemRepLipMin:Graph}, where a geometrical insight is given (see Figure \ref{Fig:Graph}).
\begin{figure}[t]
\centering
\includegraphics[width=\linewidth]{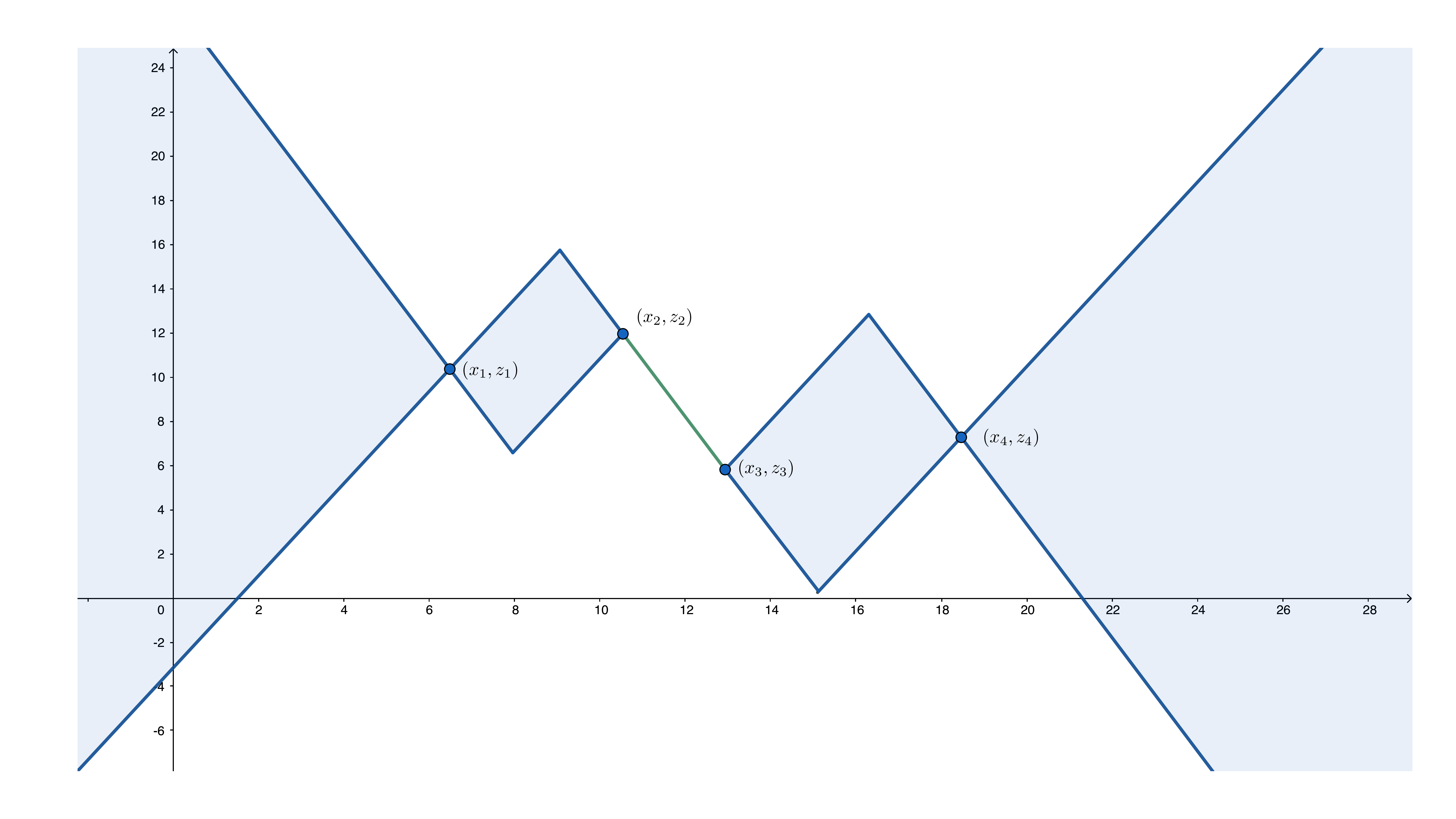}
\caption{The union of the graphs of all solutions in a simple example with four data points. Note that all solutions must directly connect $(x_2,z_2)$ to $(x_3,z_3)$, since the slope of this segment is $L_{\min}$ whose formula is given in \eqref{Eq:LipOpt}.}
\label{Fig:Graph}
\end{figure}
Finally, the   result that has the greatest practical relevance is stated in Item \ref{ItemRepLipMin:TV2} which creates an interesting link with ${\rm TV}^{(2)}$ minimization problems and hence guarantees the existence of CPWL solutions.
\subsection{Lipschitz Constraint}
While the first formulation is interesting on its own right and results in learning CPWL mappings with tunable Lipschitz constants, it does not necessarily yield a sparse (and, hence, interpretable) solution. In fact, the  learned mapping can have undesirable oscillations  as illustrated in Figure \ref{fig:limitation_lip}. This observation motivates us to propose a second formulation that combines ${\rm TV}^{(2)}$ regularization with a constraint over  the  Lipschitz constant, as expressed by 
\begin{align}
    \mathcal{V}_{\rm hyb}=\argmin_{f\in {\rm BV}^{(2)}(\mathbb{R})} &\left(\sum_{m=1}^M {\rm E}(f(x_m),y_m) + \lambda {\rm TV}^{(2)}(f)\right),   \nonumber\\&\text{s.t.} \quad L(f)\leq \bar{L}.\label{Eq:Hybrid}
\end{align}

The quantity $\bar{L}$ is the maximal  value allowed for the Lipschitz constant of the learned mapping. In this way, the stability is directly controlled by the user, while  the regularization term removes   undesired oscillations (tunable with $\lambda>0$). The solution set  $\mathcal{V}_{\rm hyb}$ is characterized in Theorem \ref{Thm:Hybrid}, from which we also deduce the existence of CPWL solutions. 
 \begin{theorem}\label{Thm:Hybrid}
The solution set $\mathcal{V}_{\rm hyb}$ of Problem \eqref{Eq:Hybrid} is a nonempty, convex, and weak*-compact subset of ${\rm BV}^{(2)}(\mathbb{R})$ whose extreme points are linear splines with at most $(M-1)$ linear regions. Moreover, there exists a unique vector ${\bf z}=(z_m)$ such that 
\begin{equation}\label{Eq:TV2MinConst2}
\mathcal{V}_{\rm hyb}=  \argmin_{f\in {\rm BV}^{(2)}(\mathbb{R})}  {\rm TV}^{(2)}(f), \quad \text{s.t.} \quad f(x_m)=z_m, 1\leq m\leq M.
 \end{equation} 
Finally, the optimal ${\rm TV}^{(2)}$ cost has the closed-form expression 
\begin{equation}\label{Eq:TV2Optimal}
{\rm TV}_{\min}= \sum_{m=2}^{M-1} \left| \frac{z_m-z_{m-1}}{x_m-x_{m-1}}-\frac{z_m-z_{m+1}}{x_m-x_{m+1}}\right|.
\end{equation}
 \end{theorem}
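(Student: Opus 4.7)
The strategy is to show that every solution of \eqref{Eq:Hybrid} passes through the same values $\V{z} = (z_m)_{m=1}^M$ at the data points, and that once these values are fixed the Lipschitz constraint is automatically satisfied by any ${\rm TV}^{(2)}$-minimizing interpolant of $\V{z}$; this reduces the theorem to the known representer theorem for pure ${\rm TV}^{(2)}$-minimizing interpolation~\cite{debarre2020sparsest}.

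First I would establish that $\mathcal{V}_{\rm hyb}$ is nonempty, convex, and weak*-compact by a standard variational argument: for a minimizing sequence $(f_n)$, coercivity of $E$ bounds $(f_n(x_m))_m$, the Lipschitz constraint bounds $L(f_n)$, and the $\lambda$-term bounds ${\rm TV}^{(2)}(f_n)$, which yields a weak*-convergent subsequence; weak*-closedness of the feasible set and weak*-lower-semicontinuity of each summand of the objective then give a minimizer, while convexity of the objective and feasible set makes $\mathcal{V}_{\rm hyb}$ convex and weak*-compact (as a weak*-closed subset of a bounded ball in ${\rm BV}^{(2)}(\R)$). The uniqueness of $\V{z}$ follows from the strict convexity of $E$: if two solutions $f_1, f_2$ disagreed at some $x_m$, their midpoint $(f_1+f_2)/2$ would still be feasible (by subadditivity of $L$ and convexity of ${\rm TV}^{(2)}$) but strictly decrease the data term, contradicting optimality. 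Since any solution is $\bar{L}$-Lipschitz and interpolates $\V{z}$, the chord slopes $c_m = (z_m - z_{m-1})/(x_m - x_{m-1})$ satisfy $|c_m| \leq \bar{L}$.

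Next I would prove the equivalence with \eqref{Eq:TV2MinConst2}. Let $\fcano$ be the canonical CPWL interpolant with slopes $(c_m)$ and knots at $x_2, \ldots, x_{M-1}$; a direct computation gives ${\rm TV}^{(2)}(\fcano) = \sum_{m=2}^{M-1}|c_{m+1} - c_m|$, which coincides with the right-hand side of \eqref{Eq:TV2Optimal}, and standard spline theory identifies this value as the infimum of ${\rm TV}^{(2)}$ over all interpolants of $\V{z}$. Since $L(\fcano) = \max_m |c_m| \leq \bar{L}$, the function $\fcano$ is feasible in \eqref{Eq:Hybrid}, so every $f \in \mathcal{V}_{\rm hyb}$ must match this ${\rm TV}^{(2)}$-cost and therefore solves \eqref{Eq:TV2MinConst2}. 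The reverse inclusion is the main obstacle: I must show that every ${\rm TV}^{(2)}$-minimizing interpolant $g$ of $\V{z}$ also satisfies $L(g) \leq \bar{L}$. My plan is to invoke the representer theorem of~\cite{debarre2020sparsest} to reduce to extreme points, which are linear splines with at most $M-1$ linear regions; for each such extreme CPWL, the interpolation constraints together with ${\rm TV}^{(2)}$-minimality force its slope sequence to be a monotone traversal of the chord slopes $(c_m)$ on each monotone run of the chord-slope sequence, so that its maximum absolute slope is at most $\max_m|c_m| \leq \bar{L}$. Subadditivity and weak*-lower-semicontinuity of $L(\cdot)$ then extend this bound to the closed convex hull of the extreme points via the Krein--Milman theorem, covering every solution of \eqref{Eq:TV2MinConst2}.

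Once the equivalence is established, the extreme-point description of $\mathcal{V}_{\rm hyb}$ is inherited from \eqref{Eq:TV2MinConst2} via the representer theorem, and the closed-form \eqref{Eq:TV2Optimal} is exactly the computed value of ${\rm TV}^{(2)}(\fcano)$.
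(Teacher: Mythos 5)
Your proposal is correct in substance and follows the same overall architecture as the paper: establish existence via weak*-compactness, pin down the unique vector $\V{z}$ through strict convexity of $E$, and then show that the Lipschitz constraint is inactive on the solution set of the pure ${\rm TV}^{(2)}$ interpolation problem \eqref{Eq:TV2MinConst2}, so that the hybrid problem collapses onto it and the extreme-point description and the value \eqref{Eq:TV2Optimal} are inherited from \cite{debarre2020sparsest}. Two execution points differ. First, for the reverse inclusion (every ${\rm TV}^{(2)}$-minimizing interpolant of $\V{z}$ is $\bar{L}$-Lipschitz), you propose a detour through extreme points and Krein--Milman; the paper instead invokes Item~\ref{ItemRepLipMin:TV2} of Theorem~\ref{Thm:RepLipMin}, which it proved precisely for this purpose and which applies to \emph{all} solutions of \eqref{Eq:TV2MinConst} at once: by \cite[Theorem 2]{debarre2020sparsest}, any such solution either follows $\fcano$ or is convex/concave on a window of three consecutive intervals, so its difference quotients are sandwiched between adjacent chord slopes and hence bounded by $\max_m |c_m| = L_{\min} \leq \bar{L}$. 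Your ``monotone traversal'' claim is the same structural fact restricted to extreme points, so your route works, but it adds a Krein--Milman closure step that the paper avoids, and that step itself requires the weak*-lower semicontinuity of $L$ on ${\rm BV}^{(2)}(\mathbb{R})$. Second---and this is the one place where you assert rather than prove---that same property (equivalently, the weak*-closedness of the ball $\{f : L(f) \leq \bar{L}\}$) is the essential technical ingredient of the existence argument, and it is where the paper spends its effort: for a weak*-convergent sequence $f_n \to f_{\lim}$ with $L(f_n) \leq \bar{L}$, one uses the weak*-continuity of the sampling functionals $\delta(\cdot - a)$ on ${\rm BV}^{(2)}(\mathbb{R})$ together with a triangle inequality to pass the bound $|f_n(a) - f_n(b)| \leq \bar{L}\,|a-b|$ to the limit. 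You should supply this argument explicitly; without it, both your existence step and your Krein--Milman extension remain incomplete.
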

The proof is given in Appendix \ref{App:Hybrid}. The proof involves the weak*-closedness of the constraint box $L(f)\leq \overline{L}$ which is essential to prove existence. Once the existence of a minimizer is guaranteed, we can invoke the results of Debarre {\it et al.} in \cite{debarre2020sparsest} for ${\rm TV}^{(2)}$ minimization to deduce the remaining parts. We also remark that the Lipschitz constraint only affects the vector ${\bf z}$ in \eqref{Eq:TV2MinConst2}, which forces its entries to satisfy the inequalities
\begin{equation}
\left| \frac{z_m-z_{m-1}}{x_m-x_{m-1}}\right| \leq \overline{L}, \quad m=2,\ldots, M.
\end{equation}

\subsection{Connection to Neural Networks}
In this part, we show that our second formulation \eqref{Eq:Hybrid} is equivalent to training a two-layer neural network with weight decay. Let us recall that a univariate ReLU network with two layers and skip connections is a mapping $f_{\boldsymbol{\theta}}:\mathbb{R}\rightarrow\mathbb{R}$ of the form 
\begin{equation}\label{Eq:TwoLayerNN}
  f_{\boldsymbol{\theta}}(x)= c_0+ c_1 x + \sum_{k=1}^K v_k {\rm ReLU}(w_k x - b_k),   
\end{equation}
where $c_1\in\mathbb{R}$ is the weight of the skip connection, $K\in\mathbb{N}$ is the width of the network, $v_k,w_k\in\mathbb{R},k=1,\ldots,K$ are the linear weights and $b_k\in\mathbb{R}$, $k=1,\ldots,K$ and $c_0\in \mathbb{R}$ are the bias terms of the first and second layers, respectively. These parameters are concatenated in a single vector $\boldsymbol{\theta}=(K,\mathbf{v},\mathbf{w},\mathbf{b},\mathbf{c})$, and we denote by $\boldsymbol{\Theta}$ the set of all possible parameter vectors $\boldsymbol{\theta}$. Thus, the training problem with Lipschitz constraint and weight decay is formulated as
\begin{align}
    \mathcal{V}_{NN}= \argmin_{\boldsymbol{\theta}\in\boldsymbol{\Theta}}&\left( \sum_{m=1}^M {\rm E}(f_{\boldsymbol{\theta}}(x_m),y_m) +  \lambda {\rm R}(\boldsymbol{\theta})  \right), \nonumber \\ & \text{s.t.} \quad L(f_{\boldsymbol{\theta}})\leq \bar{L}, \label{Eq:NNlearning}
\end{align}
where ${\rm R}(\boldsymbol{\theta}) = \sum_{k=1}^K \left( \frac{|v_k|^2 + |w_k|^2}{2}\right)$ is the regularization term corresponding to weight decay. In Proposition~\ref{prop:equivalence_training_NN}, we show the equivalence between this training problem and our Lipschitz-constrained formulation \eqref{Eq:Hybrid}.
\begin{proposition}
\label{prop:equivalence_training_NN}
For any solution $\boldsymbol{\theta}^*$ of \eqref{Eq:NNlearning}, $f_{\boldsymbol{\theta}^*}$ is a CPWL solution of \eqref{Eq:Hybrid}. Moreover, any CPWL solution of \eqref{Eq:Hybrid} can be expressed as a two-layer ReLU network $f_{\boldsymbol{\theta}^*}$ with skip connections whose parameter vector is optimal in the sense of \eqref{Eq:NNlearning}, \ie $\boldsymbol{\theta}^* \in \mathcal{V}_{NN}$. 
\end{proposition}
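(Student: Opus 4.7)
The plan is to prove the two inclusions $\mathcal{V}_{NN} \subseteq \mathcal{V}_{\rm hyb}$ (as functions) and (CPWL part of) $\mathcal{V}_{\rm hyb} \subseteq \mathcal{V}_{NN}$ by showing that the weight-decay penalty ${\rm R}(\boldsymbol{\theta})$ is an upper bound for ${\rm TV}^{(2)}(f_{\boldsymbol{\theta}})$ that can always be made tight. The whole argument hinges on the classical rescaling invariance of ReLU units: for any $\alpha_k>0$, replacing $(v_k,w_k,b_k)$ by $(v_k/\alpha_k,\,\alpha_k w_k,\,\alpha_k b_k)$ leaves $f_{\boldsymbol{\theta}}$ (and therefore $L(f_{\boldsymbol{\theta}})$ and the data-fidelity term) unchanged.

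\textbf{Step 1 (from parameters to ${\rm TV}^{(2)}$).} A direct distributional computation using $\mathrm{D}^2 {\rm ReLU}(t)=\delta(t)$ and the chain rule gives
\[
\mathrm{D}^2 f_{\boldsymbol{\theta}} \;=\; \sum_{k=1}^{K} v_k\,|w_k|\,\delta\!\left(\cdot - b_k/w_k\right),
\]
so by \eqref{eq:TV_norm_diracs} we have ${\rm TV}^{(2)}(f_{\boldsymbol{\theta}}) \le \sum_k |v_k w_k|$, with equality whenever the knots $b_k/w_k$ are pairwise distinct (coincident knots may always be merged, which only decreases ${\rm R}$). Combined with the AM–GM inequality $|v_k w_k|\le\tfrac12(|v_k|^2+|w_k|^2)$, this yields ${\rm TV}^{(2)}(f_{\boldsymbol{\theta}}) \le {\rm R}(\boldsymbol{\theta})$. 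Picking $\alpha_k=\sqrt{|v_k/w_k|}$ in the rescaling makes $|v_k|=|w_k|$, hence turns the inequality ${\rm TV}^{(2)}(f_{\boldsymbol{\theta}}) \le {\rm R}(\boldsymbol{\theta})$ into an equality while preserving $f_{\boldsymbol{\theta}}$ and $L(f_{\boldsymbol{\theta}})$.

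\textbf{Step 2 (from CPWL functions to parameters).} Conversely, every CPWL $g\in{\rm BV}^{(2)}(\mathbb{R})$ with knots $\tau_k$ and second-derivative masses $a_k$ admits the representation $g(x)=c_0+c_1 x+\sum_k a_k\,{\rm ReLU}(x-\tau_k)$, so ${\rm TV}^{(2)}(g)=\sum_k|a_k|$. Applying the rescaling with $\alpha_k=\sqrt{|a_k|}$ produces a parameter vector $\boldsymbol{\theta}_g\in\boldsymbol{\Theta}$ with $f_{\boldsymbol{\theta}_g}=g$, $L(f_{\boldsymbol{\theta}_g})=L(g)$, and ${\rm R}(\boldsymbol{\theta}_g)=\sum_k|a_k|={\rm TV}^{(2)}(g)$.

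\textbf{Step 3 (equivalence of optima and minimizers).} Let $\mathcal{J}_{NN}^*$ and $\mathcal{J}_{\rm hyb}^*$ denote the optimal costs of \eqref{Eq:NNlearning} and \eqref{Eq:Hybrid}. For any feasible $\boldsymbol{\theta}$, $f_{\boldsymbol{\theta}}$ is feasible for \eqref{Eq:Hybrid} (since $L(f_{\boldsymbol{\theta}})\le\overline L$) and, by Step~1, its \eqref{Eq:Hybrid}-cost is at most the \eqref{Eq:NNlearning}-cost of $\boldsymbol{\theta}$; hence $\mathcal{J}_{\rm hyb}^*\le\mathcal{J}_{NN}^*$. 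For the reverse inequality, Theorem~\ref{Thm:Hybrid} ensures that $\mathcal{V}_{\rm hyb}$ contains a CPWL element $f^\star$ (an extreme point); Step~2 yields $\boldsymbol{\theta}^\star$ realizing $f^\star$ with ${\rm R}(\boldsymbol{\theta}^\star)={\rm TV}^{(2)}(f^\star)$, whose \eqref{Eq:NNlearning}-cost equals the \eqref{Eq:Hybrid}-cost of $f^\star$, so $\mathcal{J}_{NN}^*\le\mathcal{J}_{\rm hyb}^*$. Equality therefore holds throughout. Given $\boldsymbol{\theta}^\star\in\mathcal{V}_{NN}$, Step~1 shows $f_{\boldsymbol{\theta}^\star}$ is CPWL and attains $\mathcal{J}_{\rm hyb}^*$, so $f_{\boldsymbol{\theta}^\star}\in\mathcal{V}_{\rm hyb}$. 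Given a CPWL $f^\star\in\mathcal{V}_{\rm hyb}$, the $\boldsymbol{\theta}^\star$ of Step~2 achieves $\mathcal{J}_{NN}^*$ and thus lies in $\mathcal{V}_{NN}$.

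The main obstacle is not conceptual but bookkeeping: one has to handle degenerate neurons (those with $v_k=0$ or $w_k=0$, which can simply be discarded) and coincident knots (which can be merged without increasing ${\rm R}$) to guarantee that the identity ${\rm TV}^{(2)}(f_{\boldsymbol{\theta}^\star})=\sum_k |v_k^\star w_k^\star|$ really is an equality at any optimum. Together with an appeal to Theorem~\ref{Thm:Hybrid} to supply a CPWL minimizer, these details are what close the chain of inequalities and produce the bijection announced in the proposition.
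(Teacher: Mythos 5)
Your proof is correct and rests on the same two ingredients as the paper's: the rescaling invariance of ReLU units and the canonical representation of a CPWL function as an affine term plus a sum of ReLUs. The organization differs, though. The paper first proves a balancedness lemma (Lemma~\ref{lem:optimalNN}): at any optimum of \eqref{Eq:NNlearning}, $|v_k^*|=|w_k^*|$ for all $k$, established by a first-order perturbation of a single neuron along the rescaling orbit. It then concludes ${\rm R}(\boldsymbol{\theta}^*)=\sum_k|v_k^*||w_k^*|={\rm TV}^{(2)}(f_{\boldsymbol{\theta}^*})$, restricts the search to the reduced parameter set $\boldsymbol{\Theta}_{\rm red}=\{\boldsymbol{\theta}:{\rm R}(\boldsymbol{\theta})={\rm TV}^{(2)}(f_{\boldsymbol{\theta}})\}$, and shows that $\boldsymbol{\theta}\mapsto f_{\boldsymbol{\theta}}$ maps $\boldsymbol{\Theta}_{\rm red}$ onto the CPWL functions. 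You instead prove the global inequality ${\rm TV}^{(2)}(f_{\boldsymbol{\theta}})\leq\sum_k|v_kw_k|\leq{\rm R}(\boldsymbol{\theta})$ via AM--GM, show it is achievable by rescaling, and sandwich the two optimal values; the AM--GM equality case $|v_k|=|w_k|$ is exactly the paper's lemma, so the two routes are mathematically equivalent. What your version buys is that it makes explicit the first inequality ${\rm TV}^{(2)}(f_{\boldsymbol{\theta}})\leq\sum_k|v_kw_k|$, which can be strict when knots coincide and Dirac masses cancel --- a point the paper glosses over when it asserts $\sum_k|v_k^*||w_k^*|={\rm TV}^{(2)}(f_{\boldsymbol{\theta}^*})$ as an identity. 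You flag this but defer it; to fully close your Step~3 you should include the one-line argument that cancellation cannot occur at an optimum of \eqref{Eq:NNlearning} (merging coincident or degenerate neurons leaves $f_{\boldsymbol{\theta}}$ unchanged up to an affine term absorbed by the skip connection, and strictly decreases ${\rm R}$ unless the masses already add without cancellation). With that detail filled in, your argument is complete and, on this particular point, slightly more careful than the paper's.
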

The proof of Proposition \ref{prop:equivalence_training_NN} can be found in Appendix \ref{App:trainingNN}. The latter is a direct extension of the results of \cite{savarese2019how, parhi2020role}, where this equivalence is proved in the absence of a Lipschitz constraint. The interesting outcome of Proposition~\ref{prop:equivalence_training_NN} is that it provides a functional framework to study the training of Lipschitz-aware neural networks, which can be used to analyze their behavior from a theoretical perspective. In particular, our description of the solution set of Problem~\eqref{Eq:Hybrid} (Theorem~\ref{Thm:Hybrid}) and of its sparsest solutions (Theorem~\ref{Thm:DebarreOptimal}) provide interesting insights, for example on how to choose the number of neurons $K$.

 \section{Finding the Sparsest CPWL Solution}\label{sec:Algo}
 Using the theoretical results of Section \ref{sec:Proposals}, we propose an algorithm to find the sparsest CPWL solution of Problems \eqref{Eq:LipMin} and \eqref{Eq:Hybrid}. To that end, we first compute the vector $\bf z$ of the value of the optimal function at the data points $x_1,\ldots,x_m$. Using this  vector, we then deploy the sparsification algorithm of \cite{debarre2020sparsest}, whose use in the present method is motivated by  the following theorem.
 \begin{theorem}\label{Thm:DebarreOptimal}
Let $(x_m,z_m)\in\mathbb{R}^2,m=1,\ldots,M$ be a collection of ordered data points with $x_1 < \cdots < x_M$. Then, the output $f_{\mathrm{sparse}}$ of the sparsification algorithm of Debarre {\it et al.} in \cite{debarre2020sparsest} is the sparsest linear-spline interpolator of the data points. In other words, $f_{\mathrm{sparse}}$ is the CPWL interpolator with the fewest number of linear regions. 
\end{theorem}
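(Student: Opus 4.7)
The plan is to reduce Theorem~\ref{Thm:DebarreOptimal} to the main optimality result of Debarre \textit{et al.}~\cite{debarre2020sparsest}, applied to the case of exact interpolation at the data points $(x_m,z_m)$. The argument naturally breaks into two parts: first, I would verify that the algorithm's output is a valid CPWL interpolator; second, I would show that no other CPWL interpolator has strictly fewer linear regions.

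For the first part, I would simply inspect the construction of the algorithm in~\cite{debarre2020sparsest}: it takes as input the sorted data and produces a linear spline that, by design, satisfies $f_{\mathrm{sparse}}(x_m)=z_m$ for every $m$. This step is a routine verification from the algorithmic specification and carries essentially no analytical content.

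For the second part, my plan is to combine a lower bound on the minimum number of linear regions with a matching upper bound realized by the algorithm. The upper bound is immediate from the construction, which greedily merges consecutive linear segments whenever interpolation still holds. For the lower bound, I would work with the canonical slope sequence $s_m=(z_m-z_{m-1})/(x_m-x_{m-1})$ for $m=2,\ldots,M$ and show that any CPWL interpolator must contain at least a certain combinatorial count of linear regions, determined by the sign pattern of the second differences $s_{m+1}-s_m$. The key technical point is that a single linear piece can cover several consecutive data points only if those points lie on a common line, while consecutive pieces must agree at their shared breakpoint, both of which impose rigid constraints on admissible mergers.

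The main obstacle is precisely this lower bound. A CPWL interpolator is free to place its knots anywhere in $\mathbb{R}$, not merely at the abscissas $\{x_m\}$, which gives it strictly more flexibility than a linear spline whose knots are restricted to the data points. The argument in~\cite{debarre2020sparsest} overcomes this difficulty by showing that this extra freedom still cannot decrease the minimum number of linear regions below the count produced by the algorithm; it relies on a careful analysis of how a knot placed strictly between two data points interacts with the interpolation constraints on either side and how several such "off-grid" knots can be collectively replaced by a knot at a data location without any increase in complexity. Once this lower bound is established, the greedy construction of the algorithm meets it exactly, which completes the proof of Theorem~\ref{Thm:DebarreOptimal}.
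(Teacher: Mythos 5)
There is a genuine gap, and it sits exactly where you locate "the main obstacle." Your plan is to reduce the theorem to "the main optimality result of Debarre \textit{et al.}," but that result (\cite[Theorem 4]{debarre2020sparsest}) only establishes that $f_{\mathrm{sparse}}$ is the sparsest CPWL function \emph{among the minimizers of the $\mathrm{TV}^{(2)}$ interpolation problem} \eqref{Eq:TV2MinConst}. The theorem you are asked to prove is explicitly a strengthening of that: sparsity among \emph{all} CPWL interpolants, with knots anywhere in $\mathbb{R}$ and no $\mathrm{TV}^{(2)}$ constraint whatsoever. You correctly identify that this extra freedom is the crux, but you then dispose of it by asserting that "the argument in \cite{debarre2020sparsest} overcomes this difficulty"; it does not, and the paper says as much when it calls this theorem a "strong enhancement" of the cited result. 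Attributing the lower bound to the reference leaves the entire new content unproved, and your fallback sketch (a "combinatorial count determined by the sign pattern of the second differences $s_{m+1}-s_m$") is not developed into an actual argument that an interpolant with off-grid knots cannot beat that count.

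For comparison, the paper's proof is an induction on $M$. It writes the canonical interpolant as $\fcano(x)=\alpha_1 x+\alpha_2+\sum_{m=2}^{M-1}a_m(x-x_m)_+$ and distinguishes three cases: all $a_m$ of one sign (handled by a pigeonhole argument on the $\lceil M/2\rceil-1$ disjoint intervals $(x_{2k-1},x_{2k+1})$, each of which must contain a knot since three data points in such an interval cannot be collinear when $a_{2k}\neq 0$); some $a_m=0$ (split the data at $x_m$ and add the two knot counts from the induction hypothesis); and $a_m a_{m+1}<0$ (split at the pair $x_m,x_{m+1}$). The last case is where the real work lies: to make the two halves' knot counts additive, one needs Lemma~\ref{lem:knots_overlaps}, which shows that any CPWL interpolant can be modified, without increasing its number of knots, so that it has no knot inside $(x_m,x_{m+1})$. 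That lemma is proved by a four-case analysis on the incoming and outgoing slopes $s^-,s^+$ relative to those of $\fcano$, and it is precisely the "careful analysis of how a knot placed strictly between two data points interacts with the interpolation constraints" that you gesture at but do not supply. Without this lemma or an equivalent, your lower bound is unsupported and the proof does not go through.
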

The proof is given in Appendix \ref{App:DebarreOptimal}. Theorem~\ref{Thm:DebarreOptimal} is a strong enhancement of \cite[Theorem 4]{debarre2020sparsest} where it is merely established that  $f_{\mathrm{sparse}}$ is the sparsest CPWL solution of \eqref{Eq:TV2MinConst}. In Theorem \ref{Thm:DebarreOptimal}, we prove that $f_{\mathrm{sparse}}$ is in fact the sparsest of \emph{all} CPWL interpolants of the data points $(x_m,z_m)$, without restricting the search to the solutions of \eqref{Eq:TV2MinConst}. This is a remarkable result in its own right, as it gives a nontrivial answer to the seemingly simple question: how to interpolate data points with the minimum number of lines? Here, we invoke Theorem~\ref{Thm:DebarreOptimal} to deduce that, with the vector $\V z$ defined in Item 2 of Theorem~\ref{Thm:RepLipMin}, $f_{\mathrm{sparse}}$ is the sparsest CPWL solution of \eqref{Eq:LipMinConst}. Similarly, with the vector $\V z$ defined in Theorem~\ref{Thm:Hybrid}, $f_{\mathrm{sparse}}$ is the sparsest CPWL solution of \eqref{Eq:Hybrid}. 

In the remaining part of this section, we detail our computation of the vectors ${\bf z}$ defined in Theorems~\ref{Thm:RepLipMin} and \ref{Thm:Hybrid}. Let us define the empirical loss function $F:\mathbb{R}^M \rightarrow \mathbb{R}_{\geq 0}$ as 
\begin{equation}\label{Eq:DataFid}
F({\bf z}) = \sum_{m=1}^M E(z_m,y_m). 
\end{equation}
For simplicity, we assume that $F$ is differentiable; the prototypical example is the quadratic loss $F({\bf z}) = \frac{1}{2} \sum_{m=1}^M (z_m-y_m)^2$. Following this notation and using \eqref{Eq:LipOpt}, the vector ${\bf z}$ in Problem \eqref{Eq:LipMinConst} is solution to the minimization problem 
\begin{equation}\label{Eq:RedLipMin}
\min_{{\bf z}\in \mathbb{R}^M} \left( F({\bf z}) + \lambda \|{\bf L}_{\rm inf} {\bf z}\|_{\infty}\right),
\end{equation}
where the matrix ${\bf L}_{\rm inf} \in \mathbb{R}^{(M-1)\times M}$ is given by 
\begin{equation}\label{Eq:Linf}
[{\bf L}_{\rm inf}]_{m,n} = \begin{cases} -v_{m+1}, & n=m 
\\ v_{m+1}, & n=m+1 
\\ 0, & \text{otherwise}
\end{cases}
\end{equation}
 where  $v_m = (x_m -x_{m-1})^{-1}, m=2,\ldots,M$. To solve \eqref{Eq:RedLipMin}, we use the well-known alternating-direction method of multipliers
(ADMM) \cite{boyd2011distributed} by defining the augmented Lagrangian as 
\begin{equation}
J({\bf z},{\bf u},{\bf w}) = F({\bf z}) + \lambda \|{\bf u}\|_{\infty} + \frac{\rho}{2} \|{\bf L}_{\inf}{\bf z} - {\bf u}\|_2^2 + {\bf w}^T({\bf L}_{\inf}{\bf z}- {\bf u}),
\end{equation}
where $\rho>0$ is a tunable parameter. The principle of ADMM is to sequentially update the unknown variables ${\bf z}\in\mathbb{R}^M$ and ${\bf u},{\bf w}\in \mathbb{R}^{M-1}$. Precisely, its $k$th iteration is given explicitly by
\begin{align}
{\bf z}^{(k+1)}&= \argmin_{{\bf z}\in\mathbb{R}^M} J( {\bf z}, {\bf u}^{(k)}, {\bf w}^{(k)}), \label{Eq:UpdateZ1} \\
{\bf u}^{(k+1)}&= \argmin_{{\bf u}\in\mathbb{R}^{M-1}} J( {\bf z}^{(k+1)}, {\bf u}, {\bf w}^{(k)}),
 \label{Eq:UpdateU1}\\
 {\bf w}^{(k+1)}&= {\bf w}^{(k)} + \rho \left( {\bf L}_{\inf} {\bf z}^{(k+1)} - {\bf u}^{(k+1)} \right). \label{Eq:UpdateW1}
\end{align}
The benefit of these sequential updates  is that Problem  \eqref{Eq:UpdateZ1}    has a differentiable cost  and hence, can be efficiently solved using gradient-based methods. (In the case of the quadratic loss $E(z,y)=\frac{1}{2}(z-y)^2$, one can even obtain a closed-form solution.) Unfortunately, the cost in \eqref{Eq:UpdateU1} is not differentiable. However, one can rewrite  the   augmented Lagrangian as 
\begin{equation}
J( {\bf z}_k, {\bf u}, {\bf w}_k) = \frac{\rho}{2} \left\|{\bf u}- {\bf L}_{\inf}{\bf z}_k - \frac{1}{\rho} {\bf w}_k\right\|_2^2 + \lambda \|{\bf u}\|_{\infty} + \text{Cnst.},
\end{equation}
where the constant term accounts for  all terms that do not depend on ${\bf u}$.  Then, by defining the vector ${\bf v}_k= \left({\bf L}_{\inf}{\bf z}_k+\frac{1}{\rho}{\bf w}_k\right)$, we rewrite \eqref{Eq:UpdateU1}  as 
\begin{align}
{\bf u}^{(k+1)}&= \argmin_{{\bf u}\in\mathbb{R}^{M-1}} \left(\frac{1}{2}\|{\bf u}- {\bf v}_k\|_2^2 + \frac{\lambda}{\rho} \|{\bf u}\|_{\infty}\right) \nonumber \\ &= {\rm prox}_{\frac{\lambda}{\rho}\|\cdot\|_{\infty}} ( {\bf v}_k),\label{Eq:ProxLinf}
\end{align}
by definition of the proximal operator. The proximal operator of the $\ell_\infty$-norm has computationally cheap implementations    (see, for example, \cite[Section 6.5.2]{parikh2014proximal}), which can be used  to update ${\bf u}$ via   \eqref{Eq:ProxLinf}.

Similarly and using \eqref{Eq:TV2Optimal}, we formulate the search for the vector ${\bf z}$ associated to the Problem \eqref{Eq:Hybrid}   as 
\begin{equation}\label{Eq:RedHybrid}
\min_{{\bf z}\in \mathbb{R}^M}  \left(F({\bf z}) + \lambda \|{\bf L}_{1} {\bf z}\|_{1}+ i_{\|{\bf L}_{\rm inf}{\bf z}\|_{\infty} \leq \overline{L}}\right),
\end{equation}
where $i_E$ denotes the indicator function of the set $E$ and ${\bf L}_1\in\mathbb{R}^{(M-2)\times M}$ with 
\begin{equation}\label{Eq:L1}
[{\bf L}_{1}]_{m,n} = \begin{cases} -v_{m+1}, & n=m 
\\ (v_{m+1}+v_{m+2}), & n=m+1 
\\ -v_{m+2}, & n=m+2,
\\ 0, & \text{otherwise}
\end{cases}
\end{equation}
for all $m=1,\ldots,M-2$ and $n=1,\ldots,M$. In this case, the augmented Lagrangian takes the form 
\begin{align} 
&J({\bf z}, {\bf u}_1,{\bf u}_{\inf} , {\bf w}_1,{\bf w}_{\inf})=F({\bf z}) \nonumber  \\&+ \frac{\rho_1}{2} \|{\bf L}_1 {\bf z}- {\bf u}_1\|_2^2 + {\bf w}_1^T({\bf L}_1{\bf z}-{\bf u}_1) + \|{\bf u}_1\|_1 \nonumber\\&+ \frac{\rho_{\inf}}{2}\|{\bf L}_{\inf}{\bf z}- {\bf u}_{\inf} \|_2^2 +{\bf w}_{\inf}^T({\bf L}_{\inf}{\bf z}-{\bf u}_{\inf})+ i_{\|{\bf u}_{\inf}\|_{\infty} \leq \overline{L}}. 
\end{align}
At the $k$th iteration, we then solve sequentially the following optimization problems
\begin{align}
{\bf z}^{(k+1)}&= \argmin_{{\bf z}\in\mathbb{R}^M} J( {\bf z}, {\bf u}_1^{(k)},{\bf u}_{\inf}^{(k)}, {\bf w}_1^{(k)},{\bf w}_{\inf}^{(k)}), \label{Eq:UpdateZ2} \\
{\bf u}_1^{(k+1)}&= \argmin_{{\bf u}_1\in\mathbb{R}^{M-2}} J( {\bf z}^{(k+1)}, {\bf u}_1,{\bf u}_{\inf}^{(k)}, {\bf w}_1^{(k)},{\bf w}_{\inf}^{(k)}),
 \label{Eq:UpdateU21}\\
{\bf u}_{\inf}^{(k+1)}&= \argmin_{{\bf u}_{\inf}\in\mathbb{R}^{M-1}} J( {\bf z}^{(k+1)}, {\bf u}_1^{(k+1)},{\bf u}_{\inf}, {\bf w}_1^{(k)},{\bf w}_{\inf}^{(k)}),
 \label{Eq:UpdateU22}\\
 {\bf w}_1^{(k+1)}&= {\bf w}_1^{(k)} + \rho_1 \left({\bf L}_1{\bf z}^{(k+1)}-{\bf u}_1^{(k+1)}\right) ,
  \label{Eq:UpdateW21}\\
 {\bf w}_{\inf}^{(k+1)}&= {\bf w}_{\inf}^{(k)} + \rho_{\inf} \left({\bf L}_{\inf}{\bf z}^{(k+1)}-{\bf u}_{\inf}^{(k+1)}\right) . \label{Eq:UpdateW22} 
\end{align}
The cost function  of Problem  \eqref{Eq:UpdateZ2} is  differentiable and so, we can solve it using gradient-based methods. For Problem \eqref{Eq:UpdateU21}, we invoke the proximal operator of the $\ell_1$-norm that is  known to be soft-thresholding \cite[Section 6.5.2.]{parikh2014proximal}. Finally and for \eqref{Eq:UpdateU22}, the proximal operator of the indicator function $i_{\|{\cdot}\|_\infty\leq \overline{L}}$ is the projection over the $\ell_{\infty}$ ball which has the simple separable expression 
\begin{equation}
[{\rm prox}_{i_{\|{\cdot}\|_\infty\leq \overline{L}}} ({\bf v})]_n =\begin{cases}\overline{L}, & v_n> \overline{L} \\ v_n, & |v_n|\leq \overline{L}\\ -\overline{L},& v_n< -\overline{L}. \end{cases} 
\end{equation}

\section{Numerical Examples and Discussions}\label{sec:Numerical}
 
 \subsection{Experimental Setup}
 \label{sec:setup}
 In all our experiments, we consider the standard quadratic loss $E(y, z) = \frac12 (y - z)^2$. We draw the data-point locations $x_m$ randomly in the interval $[0, 1]$. 
The   values $y_m$ are then generated as $y_m = f_0(x_m) + n_m$, where $f_0$  is some known CPWL function (gold standard) and $n_m$ is drawn i.i.d.\ from a zero-mean normal distribution with   variance $\sigma^2$.

\subsection{Example of Lipschitz Regularization}

In this first experiment, we illustrate our first formulation~\eqref{Eq:LipMin}. We take $M=50$ data points, a CPWL ground-truth $f_0$ with $6$ linear regions, and a noise level $\sigma = 0.02$.

\begin{figure}
    \centering
 \subfloat[Reconstructions for different values of $\lambda$. Number of linear regions: 10 for $\lambda = 0.029$ versus 37 for $\lambda = +\infty$.]{\includegraphics[width=\linewidth,valign=t]{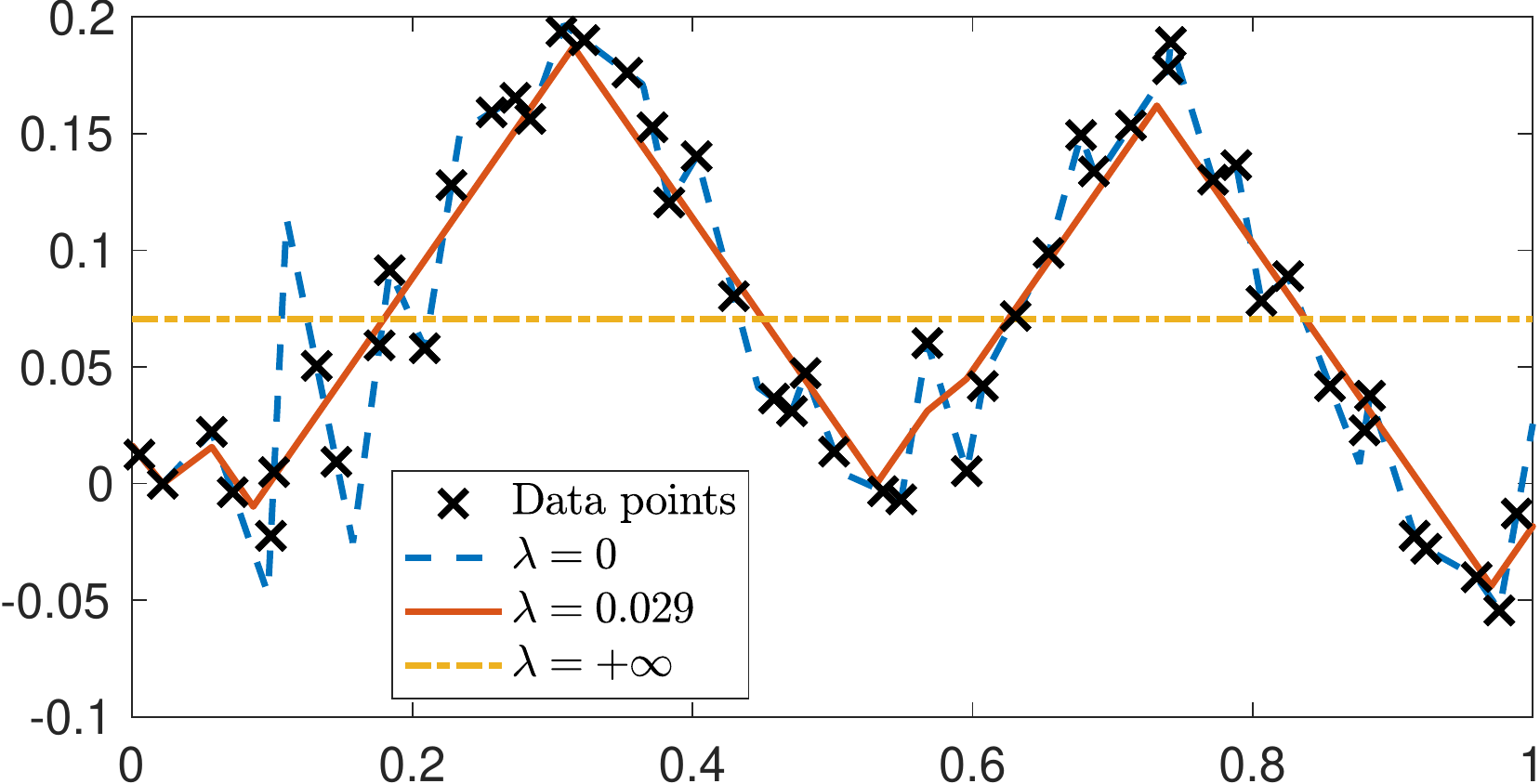} \label{fig:lip_reconstruction}} \\
    \subfloat[Evolution of the training error and the Lipschitz regularity with respect to $\lambda$. The diamond corresponds to $\lambda = 0.029$ (shown in Figure~\ref{fig:lip_reconstruction}).]{\includegraphics[width=\linewidth,valign=t]{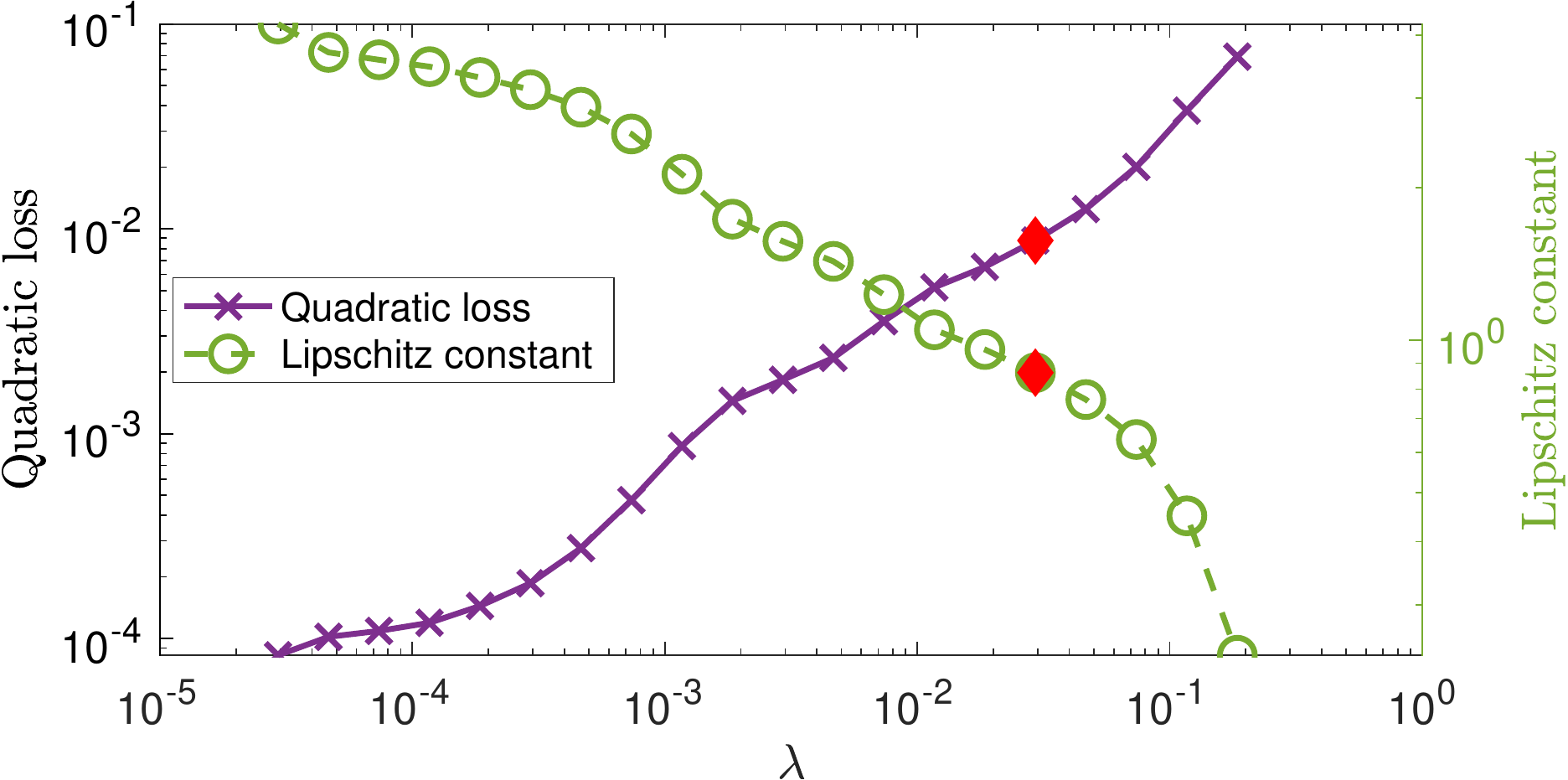}\label{fig:lip_costs}}
    \caption{Example of our first formulation~\eqref{Eq:LipMin} for $M=50$ data points.}
    \label{fig:lip}
\end{figure}

The results are shown in Figure~\ref{fig:lip}. In Figure~\ref{fig:lip_reconstruction}, we show the reconstructions for extreme values of $\lambda$. On one hand, $\lambda \to 0$ corresponds to the exact interpolation Problem~\eqref{Eq:LipMinConst}. On the other hand, $\lambda = +\infty$ corresponds to constant regression. Obviously, neither is very satisfactory: interpolation leads to overfitting (the reconstruction has 37 linear regions), and the constant regression to underfitting. We show an example of a more satisfactory reconstruction for $\lambda = 0.029$ (10 linear regions), which is visually acceptable. In Figure \ref{fig:lip_costs}, we show the evolution of the quadratic loss $\frac12 \sum_{m=1}^M (f^\ast(x_m) - y_m)^2$ and the Lipschitz constant $L(f^\ast)$, for various values of $\lambda$. With the aid of such curves, the user can choose what is considered acceptable for either of these costs and select a suitable value of $\lambda$.
 
 \subsection{Limitations of Lipschitz-Only Regularization}
 
 Despite its interesting theoretical properties, Problem~\eqref{Eq:LipMin} does not always yield satisfactory reconstructions. This is because it does not enforce a sparse reconstruction in the problem formulation, despite the fact that our algorithm reconstructs (one of) the sparsest elements of $\mathcal{V}_{\rm lip}$. This leads to learned mappings with too many linear regions and, consequently, poor interpretability.
 
 One such example is shown in Figure~\ref{fig:limitation_lip}, where we consider the shifted ReLU function $f_0(\cdot) = (\cdot-\frac{1}{2})_+$ as the ground-truth mapping. We also fix  the standard deviation of the noise to $\sigma = 0.02$. Figure~\ref{fig:relu_002} shows a reconstruction that solves Problem~\eqref{Eq:LipMin} with the regularization parameter $\lambda=0.02$. Although  the  reconstruction is satisfactory in the active section ($x > 1/2$), it has many linear regions in the flat section ($x<1/2$) that are not present in $f_0$. This is due to the fact that the active section forces the Lipschitz constant of the reconstruction to be around 1,  while oscillations with a slope  smaller  than 1 in the flat section are not penalized by the regularization. This problem  clearly cannot be fixed by a simple increase in the regularization parameter: with $\lambda = 0.2$ (Figure~\ref{fig:relu_02}), not only there are still too many linear regions in the flat section (the reconstruction has 9 linear regions in total), but also the active section is poorly reconstructed because the Lipschitz constant is  penalized too heavily by the regularization.
 
\begin{figure*}
    \centering
 \subfloat[][Lipschitz regularization. \\ Number of linear regions: 13.]{\includegraphics[width=0.33\linewidth]{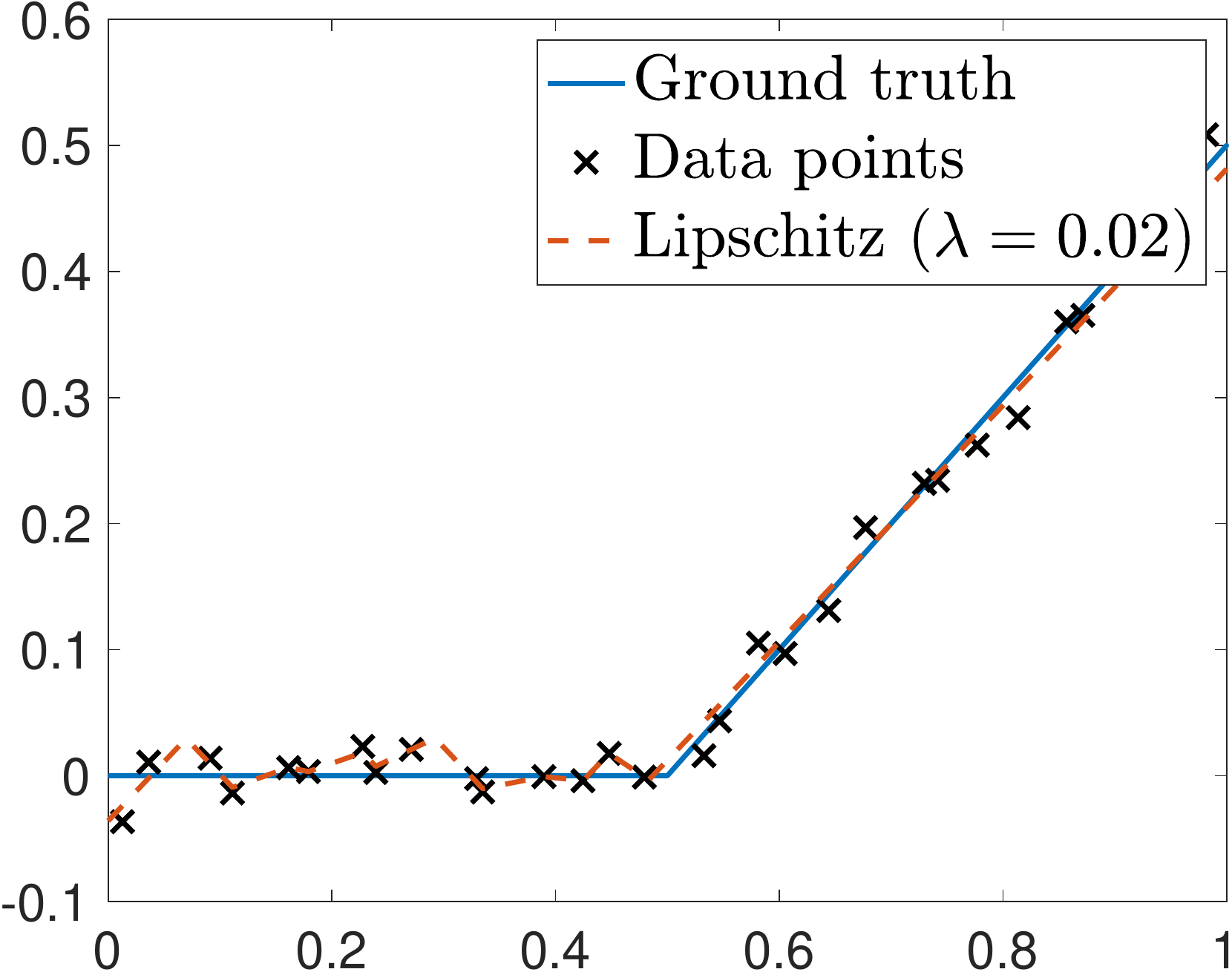} \label{fig:relu_002}}
    \subfloat[][Lipschitz regularization. \\ Number of linear regions: 9.]{\includegraphics[width=0.33\linewidth]{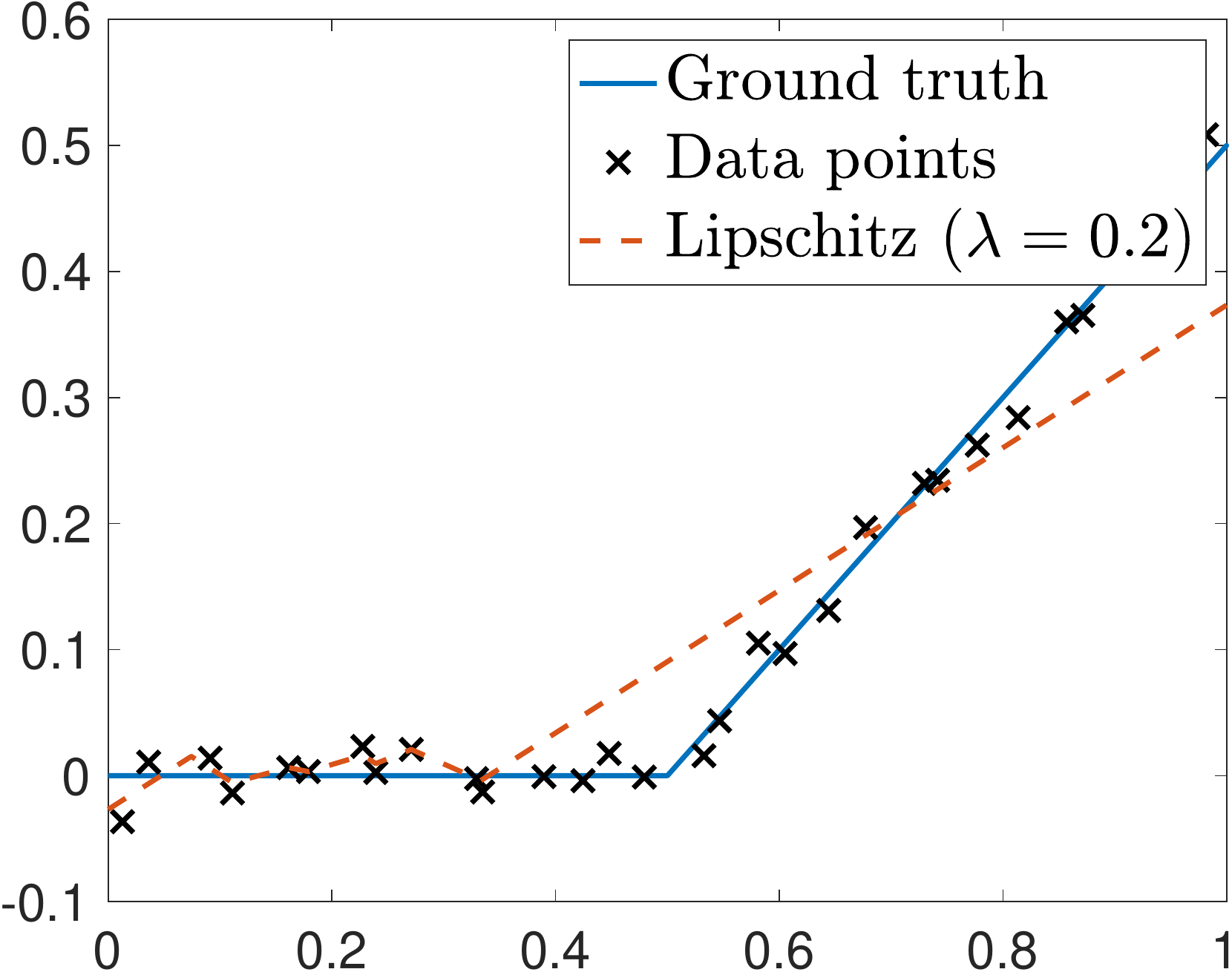}\label{fig:relu_02}} 
     \subfloat[][${\rm TV}^{(2)}$ regularization. \\ Number of linear regions: 2.]{\includegraphics[width=0.33\linewidth]{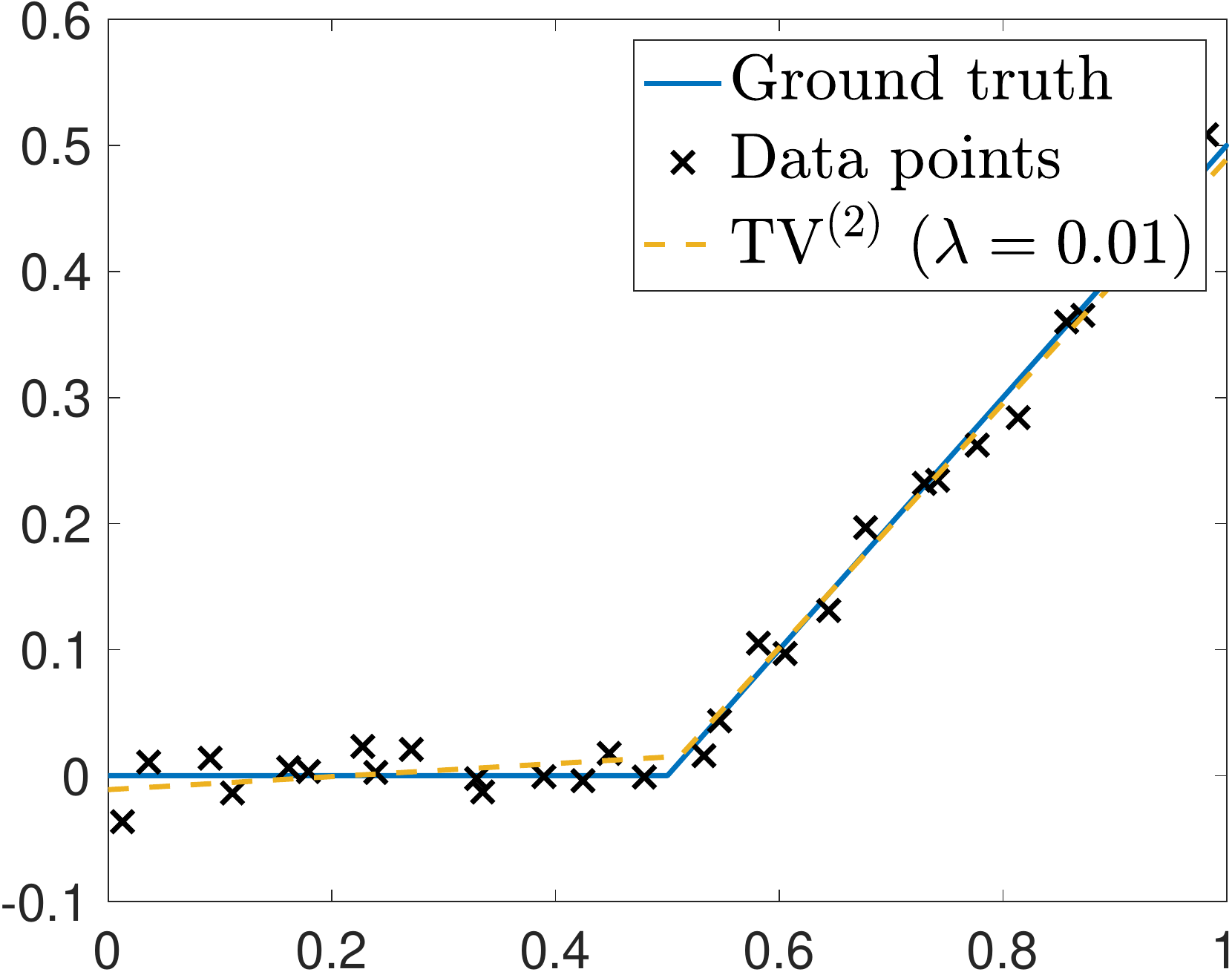}\label{fig:relu_TV2}}
    \caption{Reconstructions with a ReLU ground truth and $M=30$ data points. }
    \label{fig:limitation_lip}
\end{figure*}
 
 Hence, to reconstruct such a ground truth   accurately, it is necessary to enforce the sparsity of the reconstruction, which is exactly the purpose of the ${\rm TV}^{(2)}$ regularization. The reconstruction result of the ${\rm TV}^{(2)}$-regularized problem (\ie Problem~\eqref{Eq:Hybrid} with a relatively large Lipschitz bound) with $\lambda = 0.01$ is also shown in Figure~\ref{fig:relu_TV2}; it is clearly much more satisfactory than any of the Lipschitz-penalized reconstructions since it is very close to the ground truth and has the same sparsity (two linear regions).

 \subsection{Robustness to Outliers of the Lipschitz-Constrained Formulation}

 In this final experiment, we demonstrate the pertinence of our second formulation (Problem~\eqref{Eq:Hybrid}). More precisely, we examine  the increased robustness  to outliers of our second formulation~\eqref{Eq:Hybrid} with respect to ${\rm TV}^{(2)}$ regularization. To that end, we generate the CPWL ground truth $f_0$ with  6 linear regions and $M=50$ data points.  We then consider an additive Gaussian-noise model with low standard deviation $\sigma = 10^{-3}$ for  90\% of the data, and a much stronger $\sigma' = 3.5*10^{-2}$ for the remaining 10\%, which can be considered outliers.
 
    \begin{figure}[t]
     \centering
     \includegraphics[width=\linewidth]{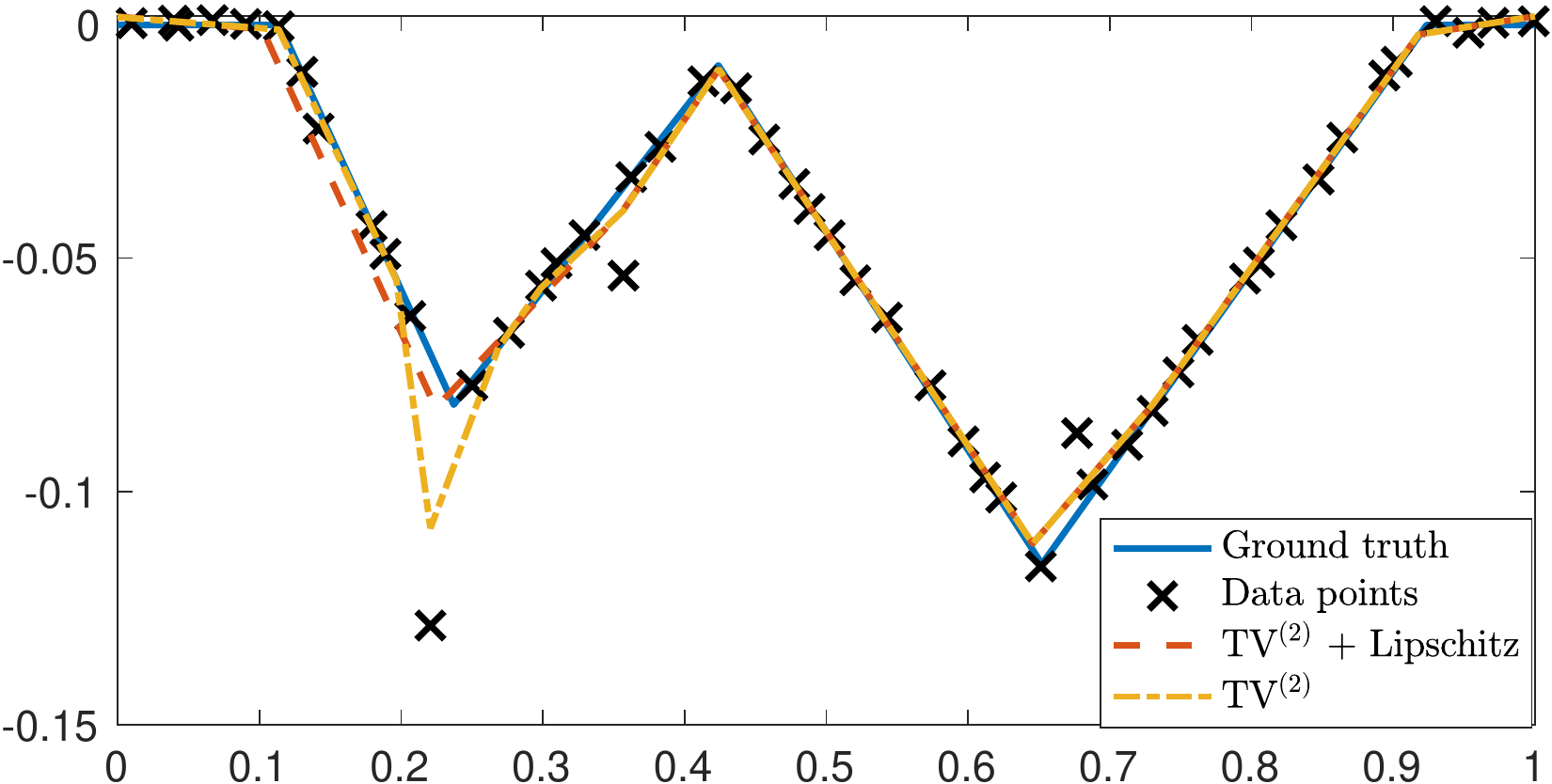}
       \caption{Reconstruction of $M=50$ data points for  $\lambda = 10^{-4}$. Our second formulation with $\bar{L} = 0.66$ produces 9 linear regions. We compare it to that of  ${\rm TV}^{(2)}$ which produces 12 linear regions.}
     \label{fig:TV2_vs_contraint}
 \end{figure}
 
 We show in Figure \ref{fig:TV2_vs_contraint} the reconstruction results using our second formulation with $\lambda = 10^{-4}$ and $\bar{L}  = 0.66$. The latter is quite satisfactory despite the presence of a strong outlier around $x_m = 0.22$. This is due to the fact that the Lipschitz constant is constrained. When using ${\rm TV}^{(2)}$-regularization alone, at  same regularization parameter, the reconstruction is very similar in most regions but is much more sensitive to this outlier which leads to an unwanted sharp peak and to the  high Lipschitz constant  $L(f^*) = 2.21$. Moreover, our reconstruction is more satisfactory in terms of sparsity (9 linear regions compared to 12, which is closer to the  $6$ linear regions of the target function $f_0$).  
 
 \section{Conclusion}
We have proposed two schemes for the learning  of one-dimensional continuous and piecewise-linear (CPWL) mappings with tunable Lipschitz constant. In the first scheme, we directly use the Lipschitz constant as a regularization term. We establish a representer theorem that allows us to deduce the existence of a CPWL solution for this continuous-domain optimization problem. In the second scheme, we use the second-order total-variation seminorm as the regularization term to which we add a Lipschitz constraint. Again, we proved the existence of a CPWL solution for this problem. Finally, we proposed an efficient algorithm to find the sparsest CPWL solution of each problem.  We illustrated the outcome of each scheme via numerical examples. A potential application of the proposed algorithm is to design stable CPWL activation functions with a minimum number of linear regions in deep neural networks.

\appendix
\subsection{Proof of Theorem \ref{thm:LipTV}}\label{App:LipTV}

\begin{proof}
For any $h>0$ and ${\bf p}=(p_1,p_2)\in\mathbb{R}^2$ with $p_1<p_2$, let us first define the test function $\varphi_h(\cdot;{\bf p})\in \mathcal{C}_0(\mathbb{R})$ as 
    \begin{align*}
    \varphi_h(x;{\bf p}) =h^{-1}\big(  &{\rm ReLU}\left(x- (p_1-h) \right) - {\rm ReLU}(x- p_1) \\&\quad + {\rm ReLU}\left(x-(p_2+h)\right) -{\rm ReLU}(x-p_2)\big).    
    \end{align*}
    
This function will be used on several occasions throughout  the proof. In particular, we use the explicit form of its second-order derivative given by 
\begin{align}
     {\rm D}^2\varphi_h(\cdot;{\bf p})  = h^{-1}\big(  &\delta\left(\cdot- (p_1-h) \right) - \delta(\cdot- p_1) \nonumber\\&\quad + \delta\left(\cdot-(p_2+h)\right) -\delta(\cdot-p_2)\big).    
    \label{Eq:phisec}
\end{align}

{\bf Upper-Bound:}
Similar to \eqref{Eq:LipSupD}, we have that ${\ell}(f) =\mathrm{ess} \inf_{x\in \mathbb{R}} |f'(x)|$. For a fixed $\epsilon>0$, by definition of the essential supremum and infimum, there exist   $\bar{x}, \underline{x}\in\mathbb{R}$ at which $f$ is differentiable with  
$|f'(\bar{x})| \geq \left(L(f) -\epsilon\right)$ and $|f'(\underline{x})| \leq \left(\ell(f) + \epsilon\right)$. Without   loss of generality, we  assume that $\bar{x}< \underline{x}$.  Following the limit definition of the derivative, we then consider a small radius $h>0$ such that 
\begin{align*}
   &\left| \frac{ f(\bar{x}+h)- f(\bar{x})}{h}\right| \geq  |f'(\bar{x})|- \epsilon \geq L(f) - 2\epsilon, \\&     \left|\frac{ f(\underline{x}+h)- f(\underline{x})}{h}\right| \leq  |f'(\underline{x})|+ \epsilon \leq \ell(f) + 2\epsilon.
\end{align*}
Now, let us  consider the test function $\varphi= \varphi_h\left(\cdot; (\bar{x}+h ,\underline{x}) \right)$.  Following the definition of the total-variation norm \eqref{Eq:TV} together with  $\|\varphi\|_{\infty}=1$, we deduce that ${\rm TV}^{(2)}(f)  \geq   |\langle {\rm D}^2 f, \varphi \rangle|  = |\langle  f, {\rm D}^2\varphi \rangle |,$
where the last equality  follows from   the self-adjointness of the second-order derivative. Using \eqref{Eq:phisec}, we thus have that
\begin{align*}
    {\rm TV}^{(2)}(f)& \geq h^{-1} |f(\bar{x}) - f(\bar{x}+h) + f(\underline{x}+h) - f(\underline{x})| \\& \geq \frac{|f(\bar{x}+h) - f(\bar{x})|}{h}  - \frac{|f(\underline{x}+h)- f(\underline{x})|}{h} \\&\geq L(f) -2\epsilon - \ell(f) - 2\epsilon 
    = L(f) - \ell(f) - 4\epsilon. 
\end{align*}
Finally, by letting $\epsilon \rightarrow 0$, we deduce the desired upper-bound.

{\bf Saturation---Sufficient Conditions:} Assume that $f\in {\rm BV}^{(2)}(\mathbb{R})$ is  convex and increasing; we  denote its second-order weak derivative by $w={\rm D}^2 f$. Note that, in this case, the functions $\left(-f(\cdot)\right)$, $f(-\cdot)$, and $\left(-f(-\cdot)\right)$ are concave/decreasing, convex/decreasing, and concave/increasing, respectively. Hence, we only need to prove the saturation for $f$ and the other cases   immediately  follow.  

For a fixed $\epsilon>0$, from  \eqref{Eq:TV2Def} there exists a test function $\psi \in \mathcal{D}(\mathbb{R})$ with compact support $K={\rm supp}(\psi)$ such that $\|\psi\|_{L_\infty}=1$ and 
    $\langle w, \psi \rangle \geq \left({\rm TV}^{(2)}(f) - \epsilon\right)$.
 For any $T>0$, we consider  the test function  $\psi_T = \varphi_1\left(\cdot; (-T,T)\right)$. From \eqref{Eq:phisec}, we obtain that 
\begin{align*}
    \langle w, \psi_T \rangle &= \langle f, {\rm D}^2 \psi_T \rangle \\&=\left(f(T+1)-f(T) \right) - \left(f(-T) - f(-T-1) \right)   \\&\leq   L(f) - \ell(f), 
\end{align*}
where we have used the increasing assumption to deduce that $f(T+1)\geq f(T)$ and $f(-T)\geq f(-T-1)$.
By choosing $T$ large enough so that $K\subseteq [-T,T]$, we ensure that $(\psi_T-\psi)$ is a nonnegative function, since for all $x\in K$, we will have that $\psi_T(x)=1=\|\psi\|_{L_{\infty}}\geq \psi(x)$. Next, the convexity of $f$ implies that   $w= {\rm D}^2 f$ is a  positive measure. Hence, 
\begin{equation}
    0\leq \langle w, \psi_T-\psi\rangle  \leq L(f) - \ell(f) - {\rm TV}^{(2)}(f) + \epsilon.
\end{equation}
By letting $\epsilon\rightarrow 0$, we deduce that ${\rm TV}^{(2)}(f) \leq \left(L(f) - \ell(f)\right)$, which  implies the saturation of \eqref{Eq:LipTV}. 

{\bf Saturation---Necessary Conditions:} Let $f\in {\rm BV}^{(2)}(\mathbb{R})$ be a function for which \eqref{Eq:LipTV} is saturated. 

{\bf Monotonicity:} Assume by contradiction that $f$ is not monotone. Hence, there exists $x_{n}\in \mathbb{R}$ such that  $f'(x_{n}) <0$.  Indeed, if $f'$ were a positive  distribution, then for any $a,b\in\mathbb{R}$ with $a<b$, we would have that 
$\left(f(b)- f(a)\right) = \langle f', \mathbbm{1}_{[a,b]} \rangle\geq 0,$
which contradicts the assumption of non-monotonicity.  Similarly, there exists $x_p\in\mathbb{R}$ such that $f'(x_p)>0$.

Next, consider a point $x_L \in \mathbb{R}$ such that $|f'(x_{L})| > \left(L(f) - \epsilon\right)>0$, 
where $0<\epsilon <\frac{\min(-f'(x_n),f'(x_p))}{3}$ is a small constant. Without   loss of generality, let us assume that $f'(x_L)>0$ and $x_n< x_L$. (For $f'(x_L)<0$, the same arguments can be applied to   $x_p$ instead of $x_n$.) There exists a small radius $h\in (0, \frac{x_L-x_n}{2})$ such that 
\begin{align*}
    &\frac{ f({x}_n+h)- f({x}_n)}{h} \leq f'(x_n) + \epsilon <0,        \\&\frac{ f({x}_L+h)- f({x}_L )}{h} \geq f'(x_L) -\epsilon >0.
\end{align*}
By considering the test function $\varphi= \varphi_h(\cdot; (x_n+h,x_L))$ and using \eqref{Eq:TV2Def} once again, we deduce that
\begin{align*}
    {\rm TV}^{(2)}(f) &\geq h^{-1} \left|f(x_n) - f(x_n+h) + f(x_L+h) - f(x_L)\right|   \\&= \frac{f(x_L+h) - f(x_L)}{h}  - \frac{f(x_n+h )- f({x}_n)}{h} \\& \geq f'(x_L) - \epsilon -f'(x_n) - \epsilon 
     \geq  L(f) - f'(x_n) - 3\epsilon 
      \\&> L(f),
\end{align*}
 which contradicts  the original assumption that \eqref{Eq:LipTV} is saturated. This proves that   $f$ is  monotone. In the following, we consider the case where $f$ is an increasing function;  the decreasing case can be deduced by symmetry. 

{\bf Convexity/Concavity:} We first consider the canonical decomposition  
$f = {\rm D}^{-2}_{\phi}w + p$,
where $w={\rm D}^2 f$, ${\rm D}^{-2}_{\phi}$ is a right inverse  of the second-order derivative,  and  $p(x)= ax+b$ is an affine term \cite[Proposition 9]{unser2019Deepspline}. We then 
use the Jordan decomposition of $w={\rm D}^2 f$ as $w= (w_+ - w_-)$, 
where $w_+, w_- \in\mathcal{M}(\mathbb{R})$ are positive measures such that $\|w\|_{\mathcal{M}} = \|w_+\|_\mathcal{M}+\|w_-\|_\mathcal{M}$. This  allows us to form the  decomposition  $f=\left(f_+ - f_-\right)$, where $f_s = {\rm D}^{-2}_{\phi}w_s + p_s, \quad s\in \{+,-\}$, 
$p_+(x)= (A+a)x +b$, and $p_-(x)= Ax$ with $A>0$ being a sufficiently large constant  such that the functions $f_+$ and $f_-$ are both convex and strictly increasing. Hence, they both satisfy the  sufficient conditions for saturation, which implies that ${\rm TV}^{(2)}(f_s) = \left(L(f_s)- \ell(f_s)\right)$ for $s\in \{+,-\}$. 

Let $\epsilon < \frac{\min({\rm TV}^{(2)}(f_+),{\rm TV}^{(2)}(f_-))}{2}$ be a small constant and let $\bar{x},\underline{x}\in\mathbb{R}$ such that $f'(\bar{x}) \geq \left(L(f) - \epsilon\right)$ and $f'(\underline{x}) \leq \left(\ell(f) +\epsilon\right)$. Using these inequalities,  we deduce that
\begin{align}
   {\rm TV}^{(2)}(f) &=  L(f) - \ell(f) \nonumber \\&\leq f'(\bar{x})  - f'(\underline{x})  + 2\epsilon  \nonumber\\&=  \left(f_+'(\bar{x}) - f_-'(\bar{x})\right)  - \left( f_+'(\underline{x}) - f_-'(\underline{x})\right) + 2\epsilon\nonumber \\&= A_+ - A_- + 2\epsilon, \label{Eq:TVupperApAn}
\end{align}
where $A_s = \left(f_s'(\bar{x}) - f_s'(\underline{x})  \right)$ for $s\in\{+,-\}$. Assume without  loss of generality  that $\bar{x}> \underline{x}$. The convexity of  $f_-$ implies that $A_- \geq 0$. Moreover, we have that 
$A_+ = \left(f_+'(\bar{x}) - f_+'(\underline{x})\right) \leq \left(L(f_+) - \ell(f_+)\right) = {\rm TV}^{(2)}(f_+)$.
 Using \eqref{Eq:TVupperApAn}, this yields that ${\rm TV}^{(2)}(f) \leq {\rm TV}^{(2)}(f_+) + 2\epsilon$, 
which can be rewritten as    $2\epsilon \geq {\rm TV}^{(2)}(f_-)$. This, together with our initial assumption on  $\epsilon$, yields that $w_{-}=0$, which implies that $f$ is   convex.
\end{proof}

\subsection{Proof of Theorem \ref{Thm:RepLipMin}} \label{App:RepLipMin}
\begin{proof}
{\bf Items \ref{ItemRepLipMin:Exist} and \ref{ItemRepLipMin:z}:}
The first step is to show that the sampling functional $\delta(\cdot-{x_0}):f\mapsto f(x_0)$ is weak*-continuous in ${\rm Lip}(\mathbb{R})$. To that end, we   identify the predual Banach space $\mathcal{X}$ such that ${\rm Lip}(\mathbb{R})= \mathcal{X}'$ and then show that shifted Dirac impulses are included in $\mathcal{X}$, which is equivalent to weak*-continuity. We recall that following   \eqref{Eq:LipNative}, we can view ${\rm Lip}(\mathbb{R})$ as the native Banach space associated to the pair $\left(L_{\infty}(\mathbb{R}),{\rm D}\right)$. This allows us to deploy the machinery of \cite{unser2019native} to identify its predual space. In short, it follows from \cite{unser2019native} that the predual space has the direct-sum structure $\mathcal{X}= {\rm D}\left(L_1(\mathbb{R}) \right) \oplus {\rm span}\left( {\rm e}^{-(\cdot)^2}\right)$.
 In other words, any function $f\in\mathcal{X}$ can be decomposed as $f= {\rm D}\{g\} + c{\rm e}^{-(\cdot)^2} $, where $g\in L_1(\mathbb{R})$ and $c\in\mathbb{R}$. One can formally verify that 
    $\delta = {\rm D}\{ {\rm sgn} -{\rm erf}\} +  \frac{2}{\sqrt{\pi}}{\rm e}^{-(\cdot)^2}$, 
where ${\rm sgn}$ is the sign function and ${\rm erf}$ is the Gauss error function. Due to the rapid decay of the erf function at $t=-\infty$ and the symmetry of $({\rm sgn}-{\rm erf})$, we deduce that ${\rm sgn}-{\rm erf}\in L_1(\mathbb{R})$ and, hence,  that $\delta \in \mathcal{X}$. Finally, due to the shift-invariant structure of $\mathcal{X}$, we   deduce the weak*-continuity of the sampling functional $\delta(\cdot-x_0)$ for any $x_0\in\mathbb{R}$. 

 Next, the powerful result of \cite{unser2020unifying} allows us to provide an abstract characterization of the solution set of \eqref{Eq:LipMin}. In particular, this ensures that the solution set $\mathcal{V}_{\rm Lip}$ of \eqref{Eq:LipMin} is a nonempty, convex, weak*-compact set whose elements all pass through a fixed set of points. Put differently, the vector ${\bf z}= (z_m)$ with $z_m=f(x_m)$ is invariant to the choice of $f\in \mathcal{V}_{\rm Lip}$. Consequently, we can represent $\mathcal{V}_{\rm Lip}$ as a solution set of a constrained problem of the form  \eqref{Eq:LipMinConst}.
 
{\bf Item \ref{ItemRepLipMin:LipOpt}:} Let us first define the canonical CPWL interpolant  of a collection of 1D data points. 
\begin{definition}\label{Def:CanInt}
For a series of data points $(x_m,z_m),m=1,\ldots,M$,   the canonical interpolant $f_{\rm cano}:\mathbb{R}\rightarrow\mathbb{R}$ is the unique CPWL function that passes through these points and is differentiable over $\mathbb{R}\backslash \{x_2,\ldots,x_{M-1}\}$. 
\end{definition}
 
We first prove that $f_{\rm cano}$ is a solution of \eqref{Eq:LipMinConst}. Clearly,   the Lipschitz constant of $f_{\rm cano}$ is equal to 
$L(f_{\rm cano}) = L_{\min}$,
where $L_{\min}$ is given in \eqref{Eq:LipOpt}. Moreover, any function $f$ that passes through the data points $(x_m,z_m)$ necessarily has a Lipschitz constant greater than or equal to $L_{\min}$. This implies that $f_{\rm cano}$ is a solution of \eqref{Eq:LipMinConst} and $L_{\min}$ is the minimal value of the Lipschitz constant. Consequently, any function that satisfies the interpolation constraints and is $L_{\min}$-Lipschitz is a solution of \eqref{Eq:LipMinConst}. 

{\bf Item \ref{ItemRepLipMin:Graph}:} Consider a generic point $(x,y)\in \mathcal{E}$, and let $m$ be such that $x\in (x_{m-1},x_m)$. By definition of $\mathcal{E}$, there exists a function $f\in \mathcal{V}_{\rm Lip}$ such that $y=f(x)$. From Item 3, we deduce that $L(f)=L_{\min}$. Hence, we have the inequalities 
\begin{equation}\label{Eq:IneqLR}
\left|\frac{y- z_{m-1}}{x-x_{m-1}}\right| ,  \left|\frac{y- z_{m}}{x-x_{m}}\right|  \leq L_{\min}.
\end{equation}
These inequalities can readily be translated into the inclusion $(x,y)\in \mathcal{R}_{m-1} \cap \mathcal{L}_{m}$, which implies that 
$\mathcal{E}\subseteq  \bigcup_{m=1}^M \left(\mathcal{R}_{m-1}\cap \mathcal{L}_m\right)$.
To show the reverse inclusion, consider a point in $(x,y)\in \mathcal{R}_{m-1} \cap \mathcal{L}_m$  for some $m\in\{1,\ldots,M+1\}$ and denote by $\tilde{f}_{\rm cano}$ the canonical interpolant of $\{(x_m,z_m)\}_{m=1}^M\cup \{(x,y)\}$. Following   Item \ref{ItemRepLipMin:LipOpt}, the Lipschitz constant of $ \tilde{f}_{\rm cano}$  is given by
\begin{equation}
L(\tilde{f}_{\rm cano}) = \max\left(L_{\min}, \left|\frac{y-z_{m-1}}{x-x_{m-1}}\right|,\left|\frac{y-z_{m}}{x-x_{m}}\right|\right)  = L_{\min},
\end{equation}
where we establish the last equality by translating the inclusion $(x,y)\in \mathcal{R}_{m-1} \cap \mathcal{L}_m$ into the inequalities   in \eqref{Eq:IneqLR}.  This    implies that $\tilde{f}_{\rm cano}$ is a solution of \eqref{Eq:LipMinConst} and so, by definition, we   have  that $(x,y)\in \mathcal{E}$.

{\bf Item \ref{ItemRepLipMin:TV2}:} By \cite[Proposition 5]{debarre2020sparsest}, $f_{\rm cano}$ is also a solution of \eqref{Eq:TV2MinConst}.  We therefore need to prove that any solution $f_{\rm opt}$ of 
\eqref{Eq:TV2MinConst} has the same Lipschitz constant $L(f_{\rm 
opt}) = L(f_{\rm cano})=L_{\min}$. Due to the interpolation constraints, we 
necessarily have that $L(f_{\rm opt}) \geq L(f_{\rm cano})$; we must now prove 
the reverse inequality $L(f_{\rm opt}) \leq L(f_{\rm cano})$. By 
\cite[Theorem 2]{debarre2020sparsest}, $f_{\rm opt}$ must follow 
$f_{\rm cano}$ in $\mathbb{R} \backslash [ x_2, x_{M-1} ]$. Moreover, in each interval 
$[x_m, x_{m+1}]$ for $m \in \{2, \ldots , M-2 \}$, $f_{\rm opt}$ either 
follows $f_{\rm cano}$  or is concave or convex over the interval $
[x_{m-1}, x_{m+2}]$. Hence, it suffices to prove that, for any $m \in \{2, 
\ldots , M-2 \}$, we have that $L_m(f_{\rm opt}) \leq L(f_{\rm cano})$, 
where $L_m(f)$ denotes the Lipschitz constant of $f$ restricted to the 
interval $[x_{m}, x_{m+1}]$.

Let $m$ be an index for which $f_{\rm opt}$ need not follow $f_{\rm opt}$ in $[x_{m}, x_{m+1}]$.  (If no such index exists, then the result is trivially true.) Assume that $f_{\rm opt}$ is convex in the interval $[x_{m-1}, x_{m+2}]$; the concave scenario is derived in a similar fashion. This implies that, in this interval, the function $(\tilde{x}_1, \tilde{x}_2) \mapsto \frac{f_{\rm opt}(\tilde{x}_2) - f_{\rm opt}(\tilde{x}_1)}{\tilde{x}_2 - \tilde{x}_1}$ is increasing in both its variables.

Hence, for any $\tilde{x}_1, \tilde{x}_2 \in [x_m, x_{m+1}]$ with $\tilde{x}_1 \neq \tilde{x}_2$, we have that $\frac{z_m - z_{m-1}}{x_m-x_{m-1}} \leq \frac{f_{\rm opt}(\tilde{x}_2) - f_{\rm opt}(\tilde{x}_1)}{\tilde{x}_2 - \tilde{x}_1} \leq \frac{z_{m+2} - z_{m+1}}{x_{m+2}-x_{m+1}}$. This directly implies the desired result $L_m(f_{\rm opt}) \leq L(f_{\rm cano})$. 
\end{proof}

\section{Proof of Theorem \ref{Thm:DebarreOptimal}}\label{App:DebarreOptimal}
\begin{proof}

Let $f^\ast$ be the output of \cite[Algorithm 1]{debarre2020sparsest}. It is thus a CPWL solution of Problem~\eqref{Eq:LipMinConst} with the minimum number of linear regions. We prove that \emph{any} CPWL interpolant $f$ of the data points $\mathrm{P}_m = (x_m, z_m), m=1,\ldots,M$---not necessarily a minimizer of ${\rm TV}^{(2)}(f)$---has at least as many linear regions as $f^\ast$. Our proof is based on induction over the number  $M$ of data points. The initialization $M=2$ trivially holds, since $f^\ast$ then has a single linear region---it is simply the line connecting the two data points. Next, let $M > 2$ and assume that Theorem~\ref{Thm:DebarreOptimal} holds for $(M-1)$ or less data points (the induction hypothesis). The canonical interpolatant $\fcano$ introduced in Definition \ref{Def:CanInt} can be expressed as  
\begin{equation}
    \label{eq:canonical_sol}
        \fcano(x) = \alpha_1 x + \alpha_2 + \sum_{m=2}^{M-1} a_m (x - x_m)_+
    \end{equation}
    for some coefficients $\alpha_1,\alpha_2,a_m\in\mathbb{R}$.
There are three possible scenarios: 
\begin{enumerate}
\item all $a_m$'s are positive (or negative);\label{Scenario1}
\item at least one of them is zero;\label{Scenario2}
\item there are two consecutive coefficients with opposite signs, so that $a_m a_{m+1}<0$ for some $m$. \label{Scenario3}
\end{enumerate} 
We analyze each case separately and use the induction hypothesis to deduce the desired result. In this proof, we refer to singularities of CPWL functions (\ie the boundary points between linear regions) as \emph{knots}.

{\bf Case \ref{Scenario1}:} In this case, it is known that $f^*$ has $K= \left(\lceil \frac M2 \rceil - 1\right)$ knots \cite[Theorem 4]{debarre2020sparsest}. Assume by contradiction that there exists a CPWL interpolant $f$ with fewer knots and consider the $K$ disjoint intervals $(x_{2k-1}, x_{2k+1})$ for $1 \leq k \leq \left(\lceil \frac M2 \rceil - 1 \right)= K$. We deduce that  there exists an interval  $(x_{2k-1}, x_{2k+1})$ in which $f$ has no knots. This in turn implies that the data points $\mathrm{P}_{2k-1}$, $\mathrm{P}_{2k}$, and $\mathrm{P}_{2k+1}$ are aligned, and so that $a_{2k} = 0$, which yields a contradiction.

{\bf Case \ref{Scenario2}:} Let $m \in \{2, M-1 \}$ be such that $a_m = 0$. Consider the collection of $m < M$ data points $(\mathrm{P}_{m'})_{1 \leq m' \leq m}$; by the induction hypothesis, $f^\ast$ interpolates them with the minimal number $K_1$ of knots. The same applies to the collection of $\left(M-m+1\right) < M$ points $(\mathrm{P}_{m'})_{m \leq m' \leq M}$ with $K_2$ knots. Let $f$ be a CPWL interpolant of all the $M$ data points with the minimal number of knots. By definition of the $K_i$, $f$ must have at least $K_1$ knots in the interval $(x_1, x_{m})$ and $K_2$ knots in the interval $(x_{m}, x_{M})$. Since these intervals are disjoint, $f$ must have at least $K_1 + K_2$ knots in total. Yet, $f^\ast$ has exactly $(K_1 + K_2)$ knots: indeed, $f^\ast$ follows $\fcano$ in the interval $[x_{m-1}, x_{m+1}]$, which has no knot at $x_m$ since $a_m = 0$ (the points $\mathrm{P}_{m-1}$, $\mathrm{P}_{m}$, and $\mathrm{P}_{m+1}$ are aligned). This concludes that $f^\ast$ has the minimum number of knots.

{\bf Case \ref{Scenario3}:} Let  $m \in \{2, M-2 \}$ be such that $a_m a_{m+1} < 0$. Consider the collection of $(m+1) < M$ data points $(\mathrm{P}_{m'})_{1 \leq m' \leq m+1}$; by the induction hypothesis, $f^\ast$ interpolates them with the minimal number $K_1$ of knots. Similarly, $f^\ast$ interpolates the $(M-m+1) < M$ points $(\mathrm{P}_{m'})_{m \leq m' \leq M}$ with the minimal number $K_2$ of knots. Let $f$ be a CPWL interpolant of all the $M$ data points with the minimal number of knots. We now state a useful lemma whose proof is given below.  
\begin{lemma}
\label{lem:knots_overlaps}
Let $m \in \{2, \ldots , M-2 \}$ be such that $a_m a_{m+1} < 0$. Then, any CPWL interpolant $f$ of the data points $(\mathrm{P}_{m'})_{1 \leq m' \leq M}$     can be modified to become another CPWL interpolant $\tilde{f}$ with as many (or fewer) knots  such that $\tilde{f}$ has no knot in the interval $(x_m, x_{m+1})$. 
\end{lemma}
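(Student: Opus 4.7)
The plan is to construct $\tilde f$ explicitly by modifying $f$ on a suitable interval containing $(x_m, x_{m+1})$, and to control the change in the total knot count via a case analysis on the local structure of $f$. For convenience, let $s_i := (z_{i+1} - z_i)/(x_{i+1} - x_i)$ denote the slope between consecutive data points, so that $a_i = s_i - s_{i-1}$; the sign condition $a_m a_{m+1} < 0$ is then equivalent to $s_m$ being strictly extremal (either strict maximum or strict minimum) among $\{s_{m-1}, s_m, s_{m+1}\}$.

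If $f$ has no knot in $(x_m, x_{m+1})$, we simply take $\tilde f = f$. Otherwise, let $k \geq 1$ denote the number of knots of $f$ inside $(x_m, x_{m+1})$, and let $A, B \in \{0,1\}$ indicate whether $f$ has a knot at $x_m$, resp.\ $x_{m+1}$. The first candidate is the \emph{straight-line replacement} $f^{\sharp}$, equal to $f$ outside $[x_m, x_{m+1}]$ and equal to the affine segment from $(x_m, z_m)$ to $(x_{m+1}, z_{m+1})$ inside. This removes all $k$ interior knots but may introduce up to two new knots at the endpoints $x_m, x_{m+1}$. A direct count shows that the total number of knots changes by at most $2 - k - A - B$, so the choice $\tilde f = f^{\sharp}$ handles every instance with $k + A + B \geq 2$.

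The delicate remaining case is $k = 1$, $A = B = 0$: $f$ is smooth at $x_m$ and $x_{m+1}$ and has a single interior knot at some $y \in (x_m, x_{m+1})$. Denote by $\alpha := f'(x_m^{\pm}) = f'(y^-)$ and $\beta := f'(y^+) = f'(x_{m+1}^{\pm})$ the two local slopes; these satisfy $\alpha \neq \beta$, and the weighted-average identity $\alpha(y - x_m) + \beta(x_{m+1} - y) = s_m (x_{m+1} - x_m)$ forces both $\alpha \neq s_m$ and $\beta \neq s_m$. The key structural observation will be that $f$ must then have at least one knot in $(x_{m-1}, x_m) \cup (x_{m+1}, x_{m+2})$: indeed, if $f$ were affine on both adjacent intervals, the conditions $A = B = 0$ would yield $\alpha = s_{m-1}$ and $\beta = s_{m+1}$, making $s_m$ a strict convex combination of $s_{m-1}$ and $s_{m+1}$, in direct contradiction with its extremality.

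Without loss of generality (by a mirror argument), $f$ has $p \geq 1$ knots in $(x_{m-1}, x_m)$. I then construct $\tilde f$ to equal $f$ outside $[x_{m-1}, x_{m+1}]$, to be the affine segment with slope $s_m$ on $[x_m, x_{m+1}]$, and on $[x_{m-1}, x_m]$ to be a two-piece linear function interpolating $(x_{m-1}, z_{m-1})$ and $(x_m, z_m)$ with ending slope $s_m$ at $x_m^-$ (thereby preventing a new knot at $x_m$). The starting slope is chosen to match $f'(x_{m-1}^-)$ when this yields an intermediate knot inside $(x_{m-1}, x_m)$; otherwise it is set to $f$'s first-segment slope $\delta_1$ inside $(x_{m-1}, x_m)$, which by the extremality of $s_m$ always lies on the correct side of $s_{m-1}$ to guarantee geometric feasibility in the critical sub-case $p = 1$. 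A careful knot-count will then show that $\tilde f$ has at most as many knots as $f$. The main difficulty lies in this last bookkeeping step, which requires disentangling several geometric sub-cases distinguished by the relative ordering of $\alpha, \beta, s_{m-1}, s_m, s_{m+1}$ and by whether $f$ itself carries a knot at $x_{m-1}$.
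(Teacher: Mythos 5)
Your reduction to the case $k=1$, $A=B=0$ via the straight-line replacement is correct, as is the structural observation that $f$ must then carry a knot in $(x_{m-1},x_m)\cup(x_{m+1},x_{m+2})$. The gap is in the final construction. The claim that the first-segment slope $\delta_1$ of $f$ on $(x_{m-1},x_m)$ ``always lies on the correct side of $s_{m-1}$'' is false, and the failure is not merely a bookkeeping issue: in the critical sub-case $p=1$, $C=0$ there are configurations for which \emph{no} modification confined to $[x_{m-1},x_{m+1}]$ can succeed. Concretely, take $\mathrm{P}_{m-1}=(0,0)$, $\mathrm{P}_m=(1,1)$, $\mathrm{P}_{m+1}=(2,3)$, $\mathrm{P}_{m+2}=(3,4)$, so that $s_{m-1}=1<s_m=2>s_{m+1}=1$, and let $f$ have slopes $2,0,4$ on $[0,\tfrac12]$, $[\tfrac12,\tfrac32]$, $[\tfrac32,2]$ (extended with slope $2$ to the left of $0$ and slope $4$ to the right of $2$). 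Then $k=1$, $A=B=0$, $p=1$, $C=0$, and $\delta_1=2>s_{m-1}=1$ lies on the \emph{same} side of $s_{m-1}$ as $s_m=2$, so no two-piece segment from $\mathrm{P}_{m-1}$ to $\mathrm{P}_m$ with initial slope $\delta_1$ and final slope $s_m$ exists. Worse, any interpolant $\tilde f$ agreeing with $f$ outside $[0,2]$ and having no knot in $(1,2)$ must be the slope-$2$ chord on $[1,2]$ (hence a knot at $x=2$, since $\beta=4$) and must traverse $[0,1]$ with average slope $1$ while entering with slope $2$ at $0^-$ and exiting with slope $2$ at $1^+$, which forces at least two further knots there; so $\tilde f$ has at least $3$ knots in $[0,2]$ while $f$ has only $2$.

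The cure is to enlarge the modification window to $[x_{m-1},x_{m+2}]$ and to decide where to relocate the spare knots based on the slopes of $f$ at \emph{both} ends of that window, which is exactly what the paper does: it introduces the linear prolongations $f^-$ and $f^+$ of $f$ from $\mathrm{P}_{m-1}$ and $\mathrm{P}_{m+2}$ and splits into four cases according to whether $s^-$, $s^+$ exceed the corresponding canonical slopes, establishing in each case a matching lower bound ($2$, $3$, or $4$) on the number of knots $f$ must already have in $[x_{m-1},x_{m+2}]$. In the example above, the knot that saves the count sits in $(x_{m+1},x_{m+2})$, on the opposite side from the one your ``without loss of generality'' selects. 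Your argument could plausibly be repaired by proving that whenever the left-side two-piece construction is infeasible one necessarily has $\beta>s_m$, hence $q\geq 1$ knots of $f$ in $(x_{m+1},x_{m+2})$ and a feasible right-side construction, but this dichotomy is precisely the missing content, and neither it nor the deferred knot count is supplied in the proposal.
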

By Lemma~\ref{lem:knots_overlaps}, it can be modified to become another interpolant $\tilde{f}$ with the same total number of knots and none in the interval $(x_m, x_{m+1})$. By definition of the $K_i$, $\tilde{f}$ must have at least $K_1$ knots in the interval $(x_1, x_{m+1})$ and $K_2$ knots in the interval $(x_{m}, x_{M})$. Yet, $\tilde{f}$ has no knots in the interval $(x_m, x_{m+1})$, so it must have at least $K_1$ knots in $(x_1, x_{m}]$ and $K_2$ knots in $[x_{m+1}, x_{M})$. Since these intervals are disjoint, $\tilde{f}$ must have at least $(K_1 + K_2)$ knots in total. Yet, $f^\ast$ follows $\fcano$ in the interval $[x_{m-1}, x_{m+2}]$ and thus also has no knot in the interval $(x_1, x_{m+1})$. Therefore, by the induction hypothesis, $f^\ast$ has $K_1$ knots in $(x_1, x_{m}]$ and $K_2$ knots in $[x_{m+1}, x_{M})$, for a total of $(K_1 + K_2)$ knots. Since this is no more than $\tilde{f}$, $f^\ast$ has the minimal number of knots, which proves the induction.
\end{proof}

\begin{proof}[Proof of Lemma \ref{lem:knots_overlaps}]
 Let $f$ be a CPWL interpolant of the data points $(\mathrm{P}_{m'})_{1 \leq m' \leq M}$ with $P$ knots. In what follows, we consider a CPWL function $\tilde{f}$ that follows $f$ outside this interval and $(x_{m-1}, x_{m+2})$, and we modify it inside this interval in order to remove all knots in $(x_m, x_{m+1})$ without increasing the total number of knots.
 
 We consider the case $a_m > 0$ and $a_{m+1} < 0$ without   loss of generality. Let $s^- = f'(x_{m-1}^-)$ and $s^+ = f'(x_{m+2}^+)$ be the slopes of $f$ before and after the interval of interest $(x_{m-1}, x_{m+2})$, respectively, and we let $s_{\mathrm{cano}}^- = \fcano'(x_{m-1}^-)$ and $s_{\mathrm{cano}}^+ = \fcano'(x_{m+2}^+)$ be those of $\fcano$. We also introduce the linear functions $f^-(x) = z_{m-1} + s^- (x - x_{m-1})$ and $f^+(x) = z_{m+2} + s^+ (x - x_{m+2})$. They prolong $f$ in a straight line after $\mathrm{P}_{m-1}$ and before $\mathrm{P}_{m+2}$, respectively. We now distinguish cases based on $s^-$ and $s^+$. 
  
 {\bf Case I:  $s^- \leq s_{\mathrm{cano}}^-$ and $s^+ \leq s_{\mathrm{cano}}^+$.}
Graphically, this corresponds to  $f$ lying in none of the gray regions in Figure~\ref{fig:overlaps}.  In this case, the line $(\mathrm{P}_{m} \mathrm{P}_{m+1})$ intersects the linear function $f^-$ at some point $\mathrm{P}^- = (x^-, z^-)$ where $x^- \in (x_{m-1}, x_{m})$, and with $f^+$ at some point $\mathrm{P}^+ = (x^+, z^+)$ with $x^+ \in (x_{m+1}, x_{m+2})$. This is obvious graphically (see Figure~\ref{fig:overlaps} as an illustration for $\mathrm{P}^-$), and is due to the fact that $a_m > 0$ and $a_{m+1} < 0$. Hence, by taking an $\tilde{f}$ that connects the points $\mathrm{P}_{m-1}$, $\mathrm{P}
^-$, $\mathrm{P}^+$, and $\mathrm{P}_{m+2}$, then $\tilde{f}$ has two knots in $[x_{m-1}, x_{m+2}]$ and its knots satisfy $x^-, x^+ \not\in (x_m, x_{m+1})$. Since $f$ clearly cannot have fewer than two knots in this interval, this proves the desired result.

{\bf Case II:  $s^+ > s_{\mathrm{cano}}^+$ and $s^- > s_{\mathrm{cano}}^-$.}  In this case,  $f$ lies in both gray regions in Figure~\ref{fig:overlaps}. To pass through $\mathrm{P}_m$, $f$ must have at least one knot in $[x_{m-1}, x_m)$; let $\mathrm{P}^- = (x^-, z^-)$ be the first of those knots (with $x^- < x_m$). Similarly,   to pass through $\mathrm{P}_{m+1}$, $f$ must have a knot in $(x_{m+1}, x_{m+2}]$; let $\mathrm{P}^+ = (x^+, z^+)$ be the last of those knots (with $x^+ > x_{m+1}$).  Then, $f$ must pass through the points $\mathrm{P}^-$, $\mathrm{P}_{m}$, $\mathrm{P}_{m+1}$, $\mathrm{P}^+$. Yet, the lines $(\mathrm{P}^- \mathrm{P}_{m})$ and $(\mathrm{P}_{m+1} \mathrm{P}^+)$ clearly cannot intersect in the interval $[x_m, x_{m+1}]$, which implies that at least two knots are needed in the interval $(x^-, x^+)$. We conclude that $f$ must have at least four knots in the interval $[x_{m-1}, x_{m+2}]$. Hence, we  take an $\tilde{f}$ that simply connects the points $\mathrm{P}_{m-1}$, $\mathrm{P}_{m}$, $\mathrm{P}_{m+1}$, and $\mathrm{P}_{m+2}$ and follows $f$ elsewhere; the latter has four knots in $[x_{m-1}, x_{m+2}]$, which is no more than $f$ and thus fulfills the requirements of the proof.

{\bf Case III:  $s^+ > s_{\mathrm{cano}}^+$ and $s^- \leq s_{\mathrm{cano}}^-$}. This case is illustrated in Figure~\ref{fig:overlaps}: $f$ is outside the gray region on the left, and inside the one on the right. With a similar argument as in Case II, $f$ must have a least three knots in the interval $[x_{m-1}, x_{m+2}]$. The fact that $a_m > 0$ implies that the line $(\mathrm{P}_{m} \mathrm{P}_{m+1})$ intersects the linear function $f^-$ at some point $\mathrm{P}^- = (x^-, z^-)$ where $x^- \in (x_{m-1}, x_{m})$. We then take an $\tilde{f}$ that connects the points $\mathrm{P}_{m-1}$, $\mathrm{P}
^-$, $\mathrm{P}_{m+1}$, and $\mathrm{P}_{m+2}$ and follows $f$ elsewhere. The interpolant $\tilde{f}$ has three knots at $x^-$, $x_{m+1}$, and $x_{m+2}$ in $[x_{m-1}, x_{m+2}]$ and thus satisfies the requirements of the proof.

{\bf Case IV: 
$s^+ \leq s_{\mathrm{cano}}^+$ and $s^- > s_{\mathrm{cano}}^-$.} This  is similar to Case III, and  can be readily deduced by symmetry, thus completing the proof of Lemma \ref{lem:knots_overlaps}.

\begin{figure}
    \centering
\includegraphics[width=\linewidth]{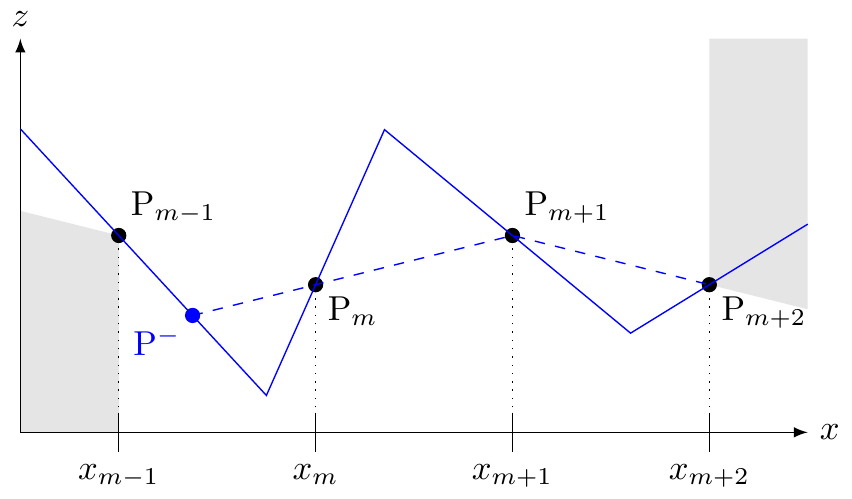}
    \caption{Illustration of Lemma~\ref{lem:knots_overlaps} in the case $a_m > 0$ and $a_{m+1} < 0$. The interpolant $f$ (solid line) satisfies $s^+ > s_{\mathrm{cano}}^+$  and $s^- \leq s_{\mathrm{cano}}^-$. The modified interpolant $\tilde{f}$ (dashed line) also has three knots $\mathrm{P}^-$, $\mathrm{P}_{m+1}$, and $\mathrm{P}_{m+2}$, but none in $(x_m, x_{m+1})$.}
    \label{fig:overlaps}
\end{figure}
\end{proof}

\subsection{Proof of Theorem \ref{Thm:Hybrid}}\label{App:Hybrid}
   \begin{proof}
 {\bf Existence:} We rewrite the problem in \eqref{Eq:Hybrid} as an unconstrained minimization  
 \begin{equation}\label{Eq:HybridUnconst}
     \mathcal{V}_{\rm hyb}=\argmin_{f\in \mathcal{M}_{{\rm D}^2}(\mathbb{R})} \sum_{m=1}^M {\rm E}(f(x_m),y_m) + \lambda {\rm TV}^{(2)}(f) + i_{L(f)\leq \bar{L}},
 \end{equation}
 where $i_E$ denotes the characteristic function of the set $E$ and is defined as 
 \begin{equation}
     i_E(f) = \begin{cases} 0, \quad f\in E \\ 
     +\infty, \quad \text{otherwise.}
     \end{cases}
 \end{equation}
 To prove the existence of a minimizer, we use a standard technique in convex analysis which involves the generalized Weierstrass theorem \cite{kurdila2006convex} to show that the cost functional of \eqref{Eq:HybridUnconst} is coercive and lower semicontinuous (in the weak*-topology), which is a sufficient condition for the existence of a solution.

 The cost functional in \eqref{Eq:Hybrid}  consists of three terms: (i) an empirical loss term $H(f) = \sum_{m=1}^M {\rm E}(f(x_m),y_m)$; (ii) a second-order total-variation regularization term $R(f) = \lambda {\rm TV}^{(2)}(f)$; and (iii) a Lipschitz constraint $i_E$, where $E=\{L(f)\leq \bar{L}\}$. It is known (see \cite{Gupta2018gTV} for a more general statement) that the functional $H(f)+ R(f)$ is coercive and weak*-lowersemincontinuous. This, together with the non-negativity of $i_E$, yields the   coercivity of the total cost. The only missing item is the weak*-lowersemicontinuity of $i_E$, for which  it is sufficient to  prove that $E$ is a closed set for  the  weak*-topology. 
 
 Let $f_n \in {\rm BV}^{(2)}(\mathbb{R})$ be a sequence of functions with $L(f_n) \leq \bar{L}$ converging in the weak*-topology to $f_{\lim} \in {\rm BV}^{(2)}(\mathbb{R})$. To prove the weak*-closedness of $E$, we need to show that $L(f_{\lim}) \leq \bar{L}$, which is   equivalent to $| f_{\lim}(a)- f_{\lim}(b)| \leq \bar{L}|a-b|$ for any   $a,b\in\mathbb{R}$. 
 
 For any $n\in \mathbb{N}$, we have that 
\begin{align}
     | f_{\lim}(a)- f_{\lim}(b)|  &\leq  |f_{\lim}(a)- f_n(a) |+|f_n(a) - f_n(b)|\nonumber \\& \qquad+ |f_n(b) - f_{\lim}(b)|.\label{Eq:flimLip}
\end{align}
Using the weak*-continuity of the sampling functionals $\delta(\cdot-a)$ and $\delta(\cdot-b)$ in ${\rm BV}^{(2)}(\mathbb{R})$ (see, for example,  \cite{unser2019Deepspline}), we deduce that 
    $f_n(a) \rightarrow f_{\lim}(a)$ and  $f_n(b) \rightarrow f_{\lim}(b)$. Moreover, we have the estimate $|f_n(a) - f_n(b)|\leq \bar{L} |a-b|$ for any $n\in\mathbb{N}$. Using these and  letting the limit $n\rightarrow +\infty$ in  \eqref{Eq:flimLip}, we get the desired bound.

{\bf Form of the Solution Set:} Now that we have proved the existence of a solution $f_0$, we can apply a standard argument based on the strict convexity of $E(\cdot,\cdot)$ (see, for example, \cite[Lemma 1]{tibshirani2013lasso}) to deduce that the original problem \eqref{Eq:Hybrid} is equivalent to 

\begin{align}
    \mathcal{V}_{\rm hyb}=\argmin_{f\in {\rm BV}^{(2)}(\mathbb{R})}  &{\rm TV}^{(2)}(f), \nonumber\\& \text{s.t.} \quad \begin{cases} L(f)\leq \bar{L}, & \\  f(x_m) = f_0(x_m),  & m=1,\ldots,M.\end{cases}\label{Eq:HybridReduced} 
 \end{align}
 Yet, Theorem \ref{Thm:RepLipMin} implies that   the constraint $L(f)\leq \bar{L}$ is automatically satisfied for any solution of 
 \begin{align}
    \mathcal{V}_{\rm hyb}=\argmin_{f\in  {\rm BV}^{(2)}(\mathbb{R})}  &{\rm TV}^{(2)}(f), \nonumber \\& \text{s.t.} \quad   f(x_m) = f_0(x_m), m=1,\ldots,M.\label{Eq:HybridReduced2}
 \end{align}
Hence, the original problem \eqref{Eq:Hybrid} is equivalent to \eqref{Eq:HybridReduced2} whose solution set has been fully described in   \cite{debarre2020sparsest}. 
 \end{proof}
 
 \subsection{Proof of Proposition \ref{prop:equivalence_training_NN}} \label{App:trainingNN}
  
 We start by proving a useful lemma. 
 \begin{lemma}\label{lem:optimalNN}
 For any $\boldsymbol{\theta}^*=(K^*,\mathbf{v}^*,\mathbf{w}^*,\mathbf{b}^*,\mathbf{c}^*) \in \mathcal{V}_{\rm NN}$, we have that $|v^*_k|=|w^*_k|$ for any $k=1,\ldots,K$.
 \end{lemma}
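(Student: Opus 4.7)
The plan is to exploit the positive homogeneity of the ReLU activation, which generates a one-parameter family of reparameterizations that leaves the function $f_{\boldsymbol{\theta}}$ invariant, and then to show that the weight-decay regularizer is strictly reduced along this family unless $|v_k^*|=|w_k^*|$ for every $k$. Specifically, for any $\alpha>0$, the neuronwise rescaling $(v_k,w_k,b_k)\mapsto (v_k/\alpha,\ \alpha w_k,\ \alpha b_k)$ leaves the summand $v_k\,\mathrm{ReLU}(w_k x - b_k)$ unchanged, because $\mathrm{ReLU}(\alpha t)=\alpha\,\mathrm{ReLU}(t)$. Consequently $f_{\boldsymbol{\theta}}$, the empirical loss, and the Lipschitz constant $L(f_{\boldsymbol{\theta}})$ are all preserved, so feasibility in \eqref{Eq:NNlearning} is not affected.

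The heart of the argument is a neuronwise AM--GM estimate: the $k$-th contribution to $R(\boldsymbol{\theta})$ becomes
\[
\frac{v_k^2/\alpha^2+\alpha^2 w_k^2}{2}\;\geq\;|v_k w_k|,
\]
with equality precisely when $\alpha^2=|v_k|/|w_k|$, at which point the rescaled weights both have modulus $\sqrt{|v_k w_k|}$. In particular, the value at $\alpha=1$ (which equals $(v_k^2+w_k^2)/2$) is strictly larger than the minimum over $\alpha$ unless $|v_k|=|w_k|$. Hence if $\boldsymbol{\theta}^*\in\mathcal{V}_{NN}$ had some index $k$ with $v_k^* w_k^*\neq 0$ and $|v_k^*|\neq|w_k^*|$, applying the equalizing rescaling would produce a feasible vector with identical loss and Lipschitz constant but strictly smaller regularizer, contradicting optimality.

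The remaining subtlety is to handle degenerate neurons in which only one of $v_k^*$, $w_k^*$ vanishes. If $v_k^*=0$, the $k$-th neuron contributes nothing to $f_{\boldsymbol{\theta}^*}$, so setting $w_k^*=0$ keeps $f_{\boldsymbol{\theta}^*}$ and $L(f_{\boldsymbol{\theta}^*})$ intact while strictly decreasing $R$ unless $w_k^*$ was already zero. Symmetrically, if $w_k^*=0$, the neuron reduces to a constant $v_k^*\,\mathrm{ReLU}(-b_k^*)$ that can be absorbed into the (unregularized) bias $c_0^*$, after which $v_k^*$ can be zeroed out without cost. Either way optimality forces $v_k^*=w_k^*=0$, so $|v_k^*|=|w_k^*|$ holds trivially.

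The main obstacle is precisely this case analysis rather than the AM--GM step itself: one must verify that each proposed modification stays in the admissible parameter set $\boldsymbol{\Theta}$, that the Lipschitz constraint $L(f_{\boldsymbol{\theta}})\leq \bar{L}$ is untouched (automatic, since $f_{\boldsymbol{\theta}}$ itself is unchanged), and that the transfer of a constant to $c_0^*$ in the degenerate branch is legitimate. Once these bookkeeping checks are done, the lemma follows immediately.
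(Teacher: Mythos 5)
Your proof is correct and follows essentially the same route as the paper: both exploit the ReLU rescaling symmetry $(v_k,w_k,b_k)\mapsto(v_k/\alpha,\alpha w_k,\alpha b_k)$, which preserves $f_{\boldsymbol{\theta}}$ (hence the loss and the Lipschitz constraint), and then use optimality of the weight-decay term along this orbit. The only cosmetic difference is that you minimize over $\alpha$ via AM--GM, which forces a separate treatment of the degenerate neurons with $v_k^*w_k^*=0$, whereas the paper differentiates at $\alpha=1$ (two one-sided limits in $\epsilon$), which covers those cases uniformly.
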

 \begin{proof}

 Let $\boldsymbol{\theta}^*=(K^*,\mathbf{v}^*,\mathbf{w}^*,\mathbf{b}^*,\mathbf{c}^*) \in \mathcal{V}_{NN}$ and $1 \leq k \leq K$. For any $\epsilon\in (-1,1)$, we define a perturbed parameter vector $\boldsymbol{\theta}_{\epsilon}=(K^*,\mathbf{v}_\epsilon,\mathbf{w}_\epsilon,\mathbf{b}_\epsilon,\mathbf{c}^*)$, where  for any $k'=1,\ldots,K$ we have that 
 \begin{align}
     v_{\epsilon,k'}&= \begin{cases} v^*_{k'}, & k'\neq k \\ (1+\epsilon)^{\frac{1}{2}} v^*_k, & k'=k\end{cases},  \\ 
     w_{\epsilon,k'}&= \begin{cases} w^*_{k'}, & k'\neq k \\ (1+\epsilon)^{-\frac{1}{2}} w^*_k, & k'=k\end{cases},\\
     b_{\epsilon,k'}&= \begin{cases} b^*_{k'}, & k'\neq k \\ (1+\epsilon)^{-\frac{1}{2}} b^*_k, & k'=k.\end{cases}
 \end{align}

Due to the positive homogeneity of the ReLU, one readily deduces from \eqref{Eq:TwoLayerNN} that  $f_{\boldsymbol{\theta}^*} = f_{\boldsymbol{\theta}_\epsilon}$ for any $\epsilon \in (-1,1)$. This together with the optimality of $\boldsymbol{\theta}^*$ in Problem~\eqref{Eq:NNlearning} implies that 
 $${v_k^*}^2 + {w_k^*}^2 \leq (1+\epsilon) {v_k^*}^2 + (1+\epsilon)^{-1} {w_k^*}^2, \quad \forall \epsilon \in (-1,1).$$ 
Multiplying both sides of the above inequality by $(1 + \epsilon) > 0$ yields
 $$  \epsilon {w_k^*}^2 \leq \epsilon(1+\epsilon) {v_k^*}^2, \quad \forall \epsilon \in (-1,1).$$ 
Letting $\epsilon \to 0^+$ yields ${w_k^*}^2 \leq {v_k^*}^2$ and $\epsilon \to 0^-$ yields ${w_k^*}^2 \geq {v_k^*}^2$, which proves that $|w_k^*|=|v_k^*|$.
\end{proof}
\begin{proof}[Proof of Proposition \ref{prop:equivalence_training_NN}]
Using Lemma \ref{lem:optimalNN}, we observe that for any $\boldsymbol{\theta}^*\in \mathcal{V}_{\rm NN}$, we have that 
$$ {\rm R}(\boldsymbol{\theta}^*) = \frac{1}{2} \sum_{k=1}^K ({v_k^*}^2 + {w_k^*}^2) = \sum_{k=1}^K |v_k^*||w_k^*| = {\rm TV}^{(2)}(f_{\boldsymbol{\theta}^*}),  $$
where the last inequality comes from the simple observation that ${\rm TV}^{(2)}( v{\rm ReLU}(w \cdot - b)) = |v||w|$ for any $v,w,b\in\mathbb{R}$.
Hence, one can rewrite the solution set $\mathcal{V}_{\rm NN}$ as 
\begin{align*}
 \mathcal{V}_{\rm NN} = \argmin_{\boldsymbol{\theta}\in\boldsymbol{\Theta}_{\rm red}} &\left(\sum_{m=1}^M {\rm E}(f_{\boldsymbol{\theta}}(x_m),y_m) + \lambda {\rm TV}^{(2)}(f_{\boldsymbol{\theta}})\right), \\& \text{s.t.} \quad L(f_{\boldsymbol{\theta}})\leq \bar{L},
\end{align*}
where $\boldsymbol{\Theta}_{\rm red}=\{\boldsymbol{\theta}\in \boldsymbol{\Theta}: {\rm R}(\boldsymbol{\theta})= {\rm TV}^{(2)}(f_{\boldsymbol{\theta}})\}$ is the reduced parameter space. To prove the announced equivalence, it remains to show that the mapping  $\boldsymbol{\Theta}_{\rm red}\rightarrow {\rm BV}^{(2)}(\mathbb{R}):\boldsymbol{\theta}\mapsto f_{\boldsymbol{\theta}}$ is a bijection onto the CPWL members of ${\rm BV}^{(2)}(\mathbb{R})$ with finitely many linear regions.

For any $\boldsymbol{\theta}\in\boldsymbol{\Theta}_{\rm red}$, the function $f_{\boldsymbol{\theta}}$ is a CPWL member of ${\rm BV}^{(2)}(\mathbb{R})$ with finitely many linear regions. To prove the converse, let    $f\in{\rm BV}^{(2)}(\mathbb{R})$ be a CPWL function with finitely many linear regions. Using the canonical representation of $f$, there exist $c_0,c_1\in \mathbb{R}$, $K\in\mathbb{N}$ and $a_k,\tau_k \in\mathbb{R}$ with $a_k\neq 0$ for $k=1,\ldots, K$ such that 
$$ f(x) = c_0 + c_1 x + \sum_{k=1}^K a_k {\rm ReLU}(x-\tau_k).$$
Now by defining $v_k = \frac{a_k}{\sqrt{|a_k|}}$, $w_k = \sqrt{|a_k|}$ and, $b_k = \sqrt{|a_k|}\tau_k$ for $k=1,\ldots,K$, the homogeneity of the ReLU yields $f=f_{\boldsymbol{\theta}}$ with $\theta=(K,{\bf c}, {\bf v}, {\bf w},{\bf b})\in \boldsymbol{\Theta}_{\rm red}$, where the latter inclusion is due to the equalities $|v_k|=|w_k|$ for $k=1,\ldots,K$.
\end{proof}
\bibliographystyle{IEEEtran}
\bibliography{RefLipTV}

\end{document}